\newtheorem{theorem}{Theorem}
\def\clip{\operatorname{clip}}
\def\IPO{IPO}
\title{\IPO: Interior-point Policy Optimization under Constraints}
\author{Yongshuai Liu, Jiaxin Ding, Xin Liu\\ 
University of California, Davis\\
\{yshliu, jxding, xinliu\}@ucdavis.edu}
\begin{document}

\maketitle

\begin{abstract}

%In reinforcement learning (RL) tasks, the agent acts to maximize the discounted cumulative reward. 
%However, in some situations in which the safety of the agent is particularly important. Constrained Markov Decision Process (CMDP) is a natural framework for reinforcement learning tasks with safety constraints, where agents learn a policy that maximizes the long-term reward while satisfying some constraints. 
In this paper, we study reinforcement learning (RL) algorithms to solve real-world decision problems with the objective of maximizing the long-term reward as well as satisfying cumulative constraints. 
We propose a novel first-order policy optimization method, Interior-point Policy Optimization ({\IPO}), 
which augments the objective with logarithmic barrier functions, inspired by the interior-point method. 
Our proposed method is easy to implement with performance guarantees and can handle general types of cumulative
multi-constraint settings. 
%can be easily extended to more general types of cumulative multi-constraint settings. 
%and the hyper parameters are obtained conveniently.   
%Furthermore, we can more general type of cumulative constraints. \jiaxin{revised later}
%Our algorithm integrates constraints to the objective function based on interior point method. 
%Compared with existing constrained policy gradient algorithms, 
%our method converge to better policy with higher long-term reward and lower constrained values. 
%Besides, our method can work on a more general type of cumulative constraints. For example, discounted cumulative constraints and mean valued constrains. 
We conduct extensive evaluations to compare our approach with state-of-the-art baselines.  %on several simulated (MuJoCo and grid-world) 
Our algorithm outperforms the baseline algorithms, %including CPO, PDO, TRPO, PPO \ys{remove?}, 
in terms of reward maximization and constraint satisfaction. 
%and the the hyper-parameter is easy to adjust.
\end{abstract}

\section{Introduction}\label{introduction}
% Motivations: the significance of the problem
% Current works and their problems
% Our method
% Advantages/contributions: (1) better performance (2) more general constraints: discounted accumulate constraints and mean valued constrains (3) fewer computation resources(first-order) (4) easy to implement (5) easy to adjust parameter comparing to primal-dual (6) less batch size
% simply introduce experiments and results

Recent advances have demonstrated significant potentials of deep reinforcement learning (RL) in solving complex sequential decision and control problems, e.g., the Atari game~\cite{mnih2015human}, robotics~\cite{andrychowicz2018learning}, Go~\cite{silver2016mastering}, etc. 
In such RL problems, the objective is to maximize the discounted cumulative reward. In many other problems, in addition to maximizing the reward, a policy needs to satisfy certain constraints. 
For example, 
in a cellular network, a common objective for the network operator is to maximize the throughput or cumulative data transmitted to users.  
At the same time, users may have different requirements for the quality of service,  
such as the requirements on average latency, cumulative throughput, or the average package loss rate,  
which are constraints on the optimization problem~\cite{julian2002qos}.
Consider another example of robot manipulation and control. 
In the task of placing an object~\cite{pham2018optlayer}, the reward is the measurement of how well the object is placed, 
while there are constraints on the motion of the robot arm, such as how much the arm can twist. %\ys{change order?}

 RL with constraints is usually modeled as a Constrained Markov Decision Process (CMDP)\cite{altman1999constrained}, where the agent must act with respect to constraints, in addition to reward maximization. There are two types of constraints: instantaneous constraints (e.g. robot arm twist) and cumulative constraints (e.g. average latency).  
 An instantaneous constraint is a constraint that the chosen action needs to satisfy in each step.  
A cumulative constraint requires that the sum of one constraint variable from the beginning to the current time step is within a certain limit. 
%Cumulative constraints can be further divided into discounted cumulative constraints and mean valued constraints.\ys{not only these two types} 
In this work, we focus on cumulative constraints, including both discounted cumulative constraints and mean valued constraints. 

A common approach to solve CMDPs is the Lagrangian relaxation method~\cite{chow2017risk,tessler2018reward}. 
  The constrained optimization problem is reduced to an unconstrained one by augmenting the objective function with a sum of the constraint functions weighted by their corresponding Lagrange multipliers. Then, the Lagrange multipliers are updated in the dual problem to satisfy the constraints.   
 %integrating penalty functions for the actions violating the constraints and updating the Lagrange multiplier of the penalty functions in the dual space. 
 %adjusting the penalty weight Lagrange multiplier between primal and dual space in turn. 
 Although constraints are satisfied
 when the policy converges, this approach is sensitive to the initialization of the Lagrange multipliers and the  learning rate, 
 and the policy obtained during training does not consistently satisfy the constraints, 
 %which makes the algorithm hard to apply, 
 as discussed in~\cite{achiam2017constrained,chow2019lyapunov}. %\ys{policy consistently improving during training}
 %Furthermore, such policies may or may not satisfy safety constraints. \xl{is this correct?}
%In the real world scenarios where safety constraints are crucial during training, this approach is not suitable. 
%%% We also demonstrate these drawbacks in experiment section. % \ref{exp:pdo}. 
%The learning rate can't be too small or too large. If it's too small, the Lagrange multiplier update slowly to find the best trading-off weight; If it's too large, it may skips the best Lagrange multiplier. It needs a lot of time to adjust the Lagrange multiplier~\cite{liang2018accelerated}. 
%Besides, the policy during training is not guaranteed to satisfy the constraints 
% ~\cite{achiam2017constrained,chow2019lyapunov}. %\jiaxin{This paragraph needs citations to support your statements. Or have you done any experiments to support?}

%A few algorithms have been recently proposed to solve CMDPs while remaining safety during training. One such algorithm is 
Constrained policy optimization (CPO)~\cite{achiam2017constrained} is proposed to solve CMDPs. It extends the trust-region policy optimization (TRPO) algorithm~\cite{schulman2015trust} to handle the constraints. CPO monotonically improves the policy during training, demonstrating promising empirical performance, 
and it guarantees constraint satisfaction during the training process once the constraints are satisfied~\cite{chow2019lyapunov}. %\ys{policy consistently improving during training}
However, CPO needs to calculate the second-order derivatives and thus is complicated to compute and implement. In addition, CPO does not handle mean valued constraints~\cite{tessler2018reward}, and it is difficult to employ CPO when there are multiple constraints. 

%CPO differs from the Lagrangian relaxation method, where the Lagrange multiplier is obtained by solving a carefully-designed quadratic approximation constrained optimization problem. Specifically, it makes a linear approximation to the complex objective and safety constraints and a quadratic approximation to the step size constraint, then calculate the Lagrange multiplier with KKT conditions. \ys{But CPO didn't consider the case when Lagrange multiplier equals to zero which means it's only suitable when the unconstrained optimal point lies out of the constraint region.} Worse more, we need to use conjugate gradient descent to solve it whose computation and implementation is extremely complicated. Additionally, the CPO can only work on discounted cumulative constraints.
%However, the CPO get the Lagrange multiplier by considering the feasible policy only lies on the boundaries of feasible regions. 

%To overcome the disadvantages Lagrangian relaxation and CPO. 
In this paper, we propose a first-order optimization method, Interior-point Policy Optimization ({\IPO}),
 to solve CMDPs with different types of cumulative constraints.  %\xl{I removed "the guarantee through training as we discussed. Please let me know if I am wrong.}
 %guarantees throughout the training and convergence process.  
%that can solve both cases that no matter the unconstrained optimal point lies in or out the constrained region. 
%Besides, our method can handle  cumulative constraints.  
Specifically, inspired by the interior-point method~\cite{boyd2004convex}, we augment the objective function of {\IPO} with logarithmic barrier functions as penalty functions to accommodate the constraints.  
Intuitively, we would like to construct %\xl{call it penalty function or barrier function. Be consistent. Do not switch around unless necessary.} 
functions such that 1)
if a constraint is satisfied, the penalty added to the reward function is zero, and 2)
 if the constraint is violated, the penalty goes to negative infinity. 
The logarithmic barrier functions satisfy these requirements, are easy to implement, and also provide nice analytical properties. 
%As a result, our proposed algorithm can accommodate  different types of cumulative constraints, both discounted  and mean valued ones. %, as well as  multiple constraints simultaneously.  
For policy optimization, we leverage PPO~\cite{schulman2017proximal}, and thus inherit its trust region property. We note that other policy optimization algorithms can be integrated when needed, which increases the flexibility of the proposed methodology. 
Our algorithm is easy to implement and the hyperparameters are convenient to tune. 

%Our method integrates constraints to the original objective function inspired by the interior-point method\cite{boyd2004convex}
%, which called \IPO ). 
%Specifically, we add zero to the objective function if the constraint satisfies the constrained threshold. Otherwise, we add negative infinity. 
%To make the new objective function derivable, we approximate the adding property with a logarithmic function. 
%For the original objective function, we take advantage of the PPO\cite{schulman2017proximal} which can limit the step size between each iteration by a clipped surrogate function. 

In summary, our contributions are as follows:
\begin{itemize}
    \item We propose {\IPO}, a first-order optimization RL algorithm under cumulative constraints. The algorithm is easy-to-implement, can handle different types and multiple constraints, with easy-to-tune hyperparameters.
    \item We provide the performance bound of the {\IPO} in terms of reward functions using primal-dual analysis. 
    \item We conduct extensive experiments to compare  {\IPO}  with the Lagrangian relaxation method and CPO on continuous control tasks, such as MuJoco and grid-world in the robotics setting.  {\IPO} outperforms the state-of-art methods with higher long-term reward and lower cumulative constraint values. 
% We also demonstrate the sample efficiency of {\IPO} compared with CPO. 
%\xl{can add more here.}
    
\end{itemize}

%In addition, \IPO can handle more general types of cumulative constraints, both discounted and mean valued constrains. 
%
%In summary, our proposed algorithms have some advantages over the Lagrangian relaxation method and CPO.
%\begin{itemize}
%\item We get better performance after the policy converge. Our method can explore the optimal policy on and under the constrained boundaries. But CPO only does it around the boundaries.
%\item We can deal with more general constrains comparing CPO. For example, discounted cumulative constraints and mean valued constraints.
%\item Our hyper-parameter is easy to adjust by comparing to the Lagrangian relaxation method which is very sensitive to the hyper-parameter and needs multi-timescale iterations to adjust.
%\item Our method can be easily extended to multi-constraints setting.
%\item We do an first-order optimization instead of conjugate gradient descent in CPO which allow us to take less computation resource.
%\item Our method is easy to implement. It can easily integrate with the state-of-art policy gradient-based methods.
%\end{itemize}

\section{Related work}\label{related-work}
Reinforcement learning with constraints is a significant and challenging topic.  A comprehensive overview can be found in ~\cite{garcia2015comprehensive,dulac2019challenges}. 
%Below we describe the related work categorized by different types of methods applied. 
% It can be applied to a lot of scenarios, especially real-world scenarios. 
%\cite{garcia2015comprehensive} gave us a comprehensive overview of it.
% Among all the safety problems, satisfying constraints is one of the most important and difficult problems remaining to be solved.
%Concerning constraints in RL is fundamentally necessary but difficult to almost all real-world systems, for example,  cellular network and robot. \cite{dulac2019challenges} made a good summary of nine challenges in real-world scenarios. One of them is constraint consideration. Depending on different types of algorithms used to solve the constrained problems, we divide relate work into four categories:

%\smallskip\noindent\textbf{Lagrangian relaxation method. }
The Lagrangian relaxation method is widely applied to solve the RL with constraints. 
Primal-Dual Optimization (PDO) \cite{chow2017risk}  employs the Lagrangian relaxation method to devise policy gradient and actor-critic algorithm for risk-constrained RL. 
PDO is further adapted to off-line policy learning in  \cite{liang2018accelerated} aiming to accelerate the learning process. They use off-line data to pre-train the Lagrange multipliers and reduce the iterations of online updating.
A batch policy learning based on the Lagrangian method is proposed in \cite{le2019batch} which considers both sampling efficiency and constraint satisfaction challenges. 
Reward Constrained Policy Optimization (RCPO)~\cite{tessler2018reward} is proposed as a multi-timescale actor-critic approach. They take advantage of TD-learning to update the policy and handle mean valued constraints based on the Lagrangian method. 

Differing from above methods which tunes Lagrange multipliers in primal and dual space, Constrained Policy Optimization (CPO)~\cite{achiam2017constrained}  uses new approximation methods from scratch to compute the Lagrange multiplier and enforce constraints in each iteration during the training process. 

Another sort of algorithms leverage Lyapunov functions~\cite{khalil2002nonlinear,neely2010stochastic} to handle constraints. %Lyapunov function is the notion in control theory to analyze the stability of dynamic systems. 
In \cite{chow2018lyapunov,chow2019lyapunov}, safe approximation policy and value iteration algorithms are induced by Lyapunov constraints. 
%as a tool to measurement if constraints satisfy their limits. The Lyapunov functions were learned during training.
%In \cite{chow2019lyapunov}, it is further extended to continuous action problem. 
%By comparing to our method, the Lyapunov-based method needs to learn the additional estimated Lyapunov functions and it's difficult to implement.

There are also works on adding a constrained layer to the policy network to satisfy zero-constraint violation at each time step, such as in \cite{dalal2018safe,pham2018optlayer} . 

To the best of our knowledge, there are no previous works using the interior-point method to solve RL problems with constraints.

\section{Preliminaries}\label{pre}
\subsection{Markov Decision Process}
%formolize the problem
A Markov Decision Process (MDP) is represented by a tuple $\left(\mathit{S},\mathit{A}, \mathit{R},\mathit{P},\mathit{\mu}, \mathit{\gamma} \right )$ \cite{sutton2018reinforcement}, where $\mathit{S}$ is the set of states, $\mathit{A}$ is the set of actions, $\mathit{R}:\mathit{S} \times \mathit{A} \times \mathit{S} \mapsto \mathbb{R}$ is the reward function, $\mathit{P}:\mathit{S} \times \mathit{A} \times \mathit{S} \mapsto [0,1]$ is the transition probability function, where $\mathit{P}(s^{'}|s,a)$ is the transition probability from state $s$ to state $s^{'}$ with taking action $a$, $\mathit{\mu}: \mathit{S}\mapsto [0,1]$ is the initial state distribution and $\mathit{\gamma}$ is the discount factor for future reward. A policy $\pi :\mathit{S} \mapsto \mathcal{P}(\mathit{A})$ is a mapping from states to a probability distribution over actions and $\pi(a|s)$ is the probability of taking action $a$ in state $s$. We write a policy $\pi$ as $\pi_{\theta}$ to emphasize its dependence on the parameter $\theta$ (e.g., a neural network policy with parameter $\theta$). A common goal of a MDP is to select a policy $\pi_{\theta}$ which maximizes the discounted cumulative reward. It is denoted as
\begin{equation}\label{objective}
    \max_{\theta}~J_{R}^{\pi_{\theta}} = \mathbb{E}_{\tau \sim \pi_{\theta}}[\sum_{t=0}^{\infty}\mathit{\gamma}^{t}\mathit{R}(s_{t},a_{t},s_{t+1})]
\end{equation}
where $\tau = (s_{0}, a_{0},s_{1}, a_{1}... )$ denotes a trajectory, and $\tau \sim \pi_{\theta}$ means that the distribution over trajectories is following policy $\pi_{\theta}$.

For a trajectory starting from state $s$,  the value function of state $s$ is
\begin{equation*}
    V_{R}^{\pi_{\theta}}{(s)} = \mathbb{E}_{\tau \sim \pi_{\theta}}[\sum_{t=0}^{\infty}\gamma^{t}\mathit{R}(s_{t},a_{t},s_{t+1})|s_{0}=s]
\end{equation*}
The  action-value function of state $s$ and action $a$ is 
\begin{equation*}
    Q_{R}^{\pi_{\theta}}{(s,a)} = \mathbb{E}_{\tau \sim \pi_{\theta}}[\sum_{t=0}^{\infty}\gamma^{t}\mathit{R}(s_{t},a_{t},s_{t+1})|s_{0}=s, a_{0}=a]
\end{equation*}
and the advantage function is 
\begin{equation}\label{advantage}
    A_{R}^{\pi_{\theta}}{(s,a)} = Q_{R}^{\pi_{\theta}}{(s,a)} - V_{R}^{\pi_{\theta}}{(s)}
\end{equation}

\subsection{Constrained Markov Decision Process}
%expect accumulate constrains and mean valued constrains
A Constrained Markov Decision Process (CMDP) extends the MDP by introducing a cost function $\mathit{C}:\mathit{S} \times \mathit{A} \times \mathit{S} \mapsto \mathbb{R}$ (similar to the reward function) for each transition tuple $(s, a,s^{'})$, several constraints  $\widetilde{C_{i}} =  f(\mathit{C}(s_{0},a_{0},s_{1})),...,\mathit{C}(s_{n},a_{n},s_{n+1}))$, and constraint limits $\epsilon_{1},...,\epsilon_{m}$. 
The constraints include discounted cumulative constraints, and mean valued constraints~\cite{altman1999constrained} . 
%In this paper, we focus on a general type of cumulative constraints which includes discounted cumulative constraint. 
%(similar to the  discounted cumulative reward) 
%and the mean valued constraint. 
The expectation over a constraint is defined as:
\begin{equation*}
    J_{C_{i}}^{\pi_{\theta}}  = \mathbb{E}_{\tau \sim \pi_{\theta}} [\widetilde{C_{i}}]
\end{equation*}
The discounted cumulative constraint is in the form of:
\begin{equation}\label{discounted}
    J_{C_{i}}^{\pi_{\theta}} = \mathbb{E}_{\tau \sim \pi_{\theta}}[\sum_{t=0}^{\infty}\gamma^{t}\mathit{C}(s_{t},a_{t},s_{t+1})]
\end{equation}
The mean valued constraint is in the form of:
\begin{equation}\label{mean}
    J_{C_{i}}^{\pi_{\theta}} = \mathbb{E}_{\tau \sim \pi_{\theta}}[\frac{1}{T}\sum_{t=0}^{T-1}\mathit{C}(s_{t},a_{t},s_{t+1})]
\end{equation}
where $T$ is the total number of time steps in each trajectory.

For a CMDP, the goal is to find a policy $\pi_{\theta}$ which maximizes the discounted cumulative reward while satisfying the cumulative constraints. It is denoted as
\begin{equation}\label{objective_constraint}
    \begin{split}
    \max_{\theta} J_{R}^{\pi_{\theta}}\\
    s.t.~~~J_{C_{i}}^{\pi_{\theta}} \leq \epsilon_{i} 
\end{split}
\end{equation}
where $\epsilon_{i}$ is the limit for each constraint.

\subsection{Policy Gradient Methods}
Policy gradient~\cite{sutton2000policy} is a method for finding an optimal policy of a MDP problem. 
%It updates policy parameters with a stochastic gradient ascent. 
It first calculates gradient of the objective Eq. (\ref{objective}),  %under policy parameters $\theta$ and update it with a given learning rate $\eta$. The traditional gradient of objective function\ref{objective} is:
\begin{equation*}
    \bigtriangledown J_{R}^{\pi_{\theta}} = \mathbb{E}_{t}[\bigtriangledown_{\theta} log\pi_{\theta}(a_{t}|s_{t})A_{t}]
\end{equation*}
where $\pi_{\theta}$ is the current policy under parameter $\theta$ and $A_{t}$ is the advantage function Eq. (\ref{advantage}) at time step $t$. Thereafter, $\theta$ is updated as 
\begin{equation*}
    \theta = \theta + \eta \bigtriangledown J_{R}^{\pi_{\theta}},
\end{equation*}
where $\eta$ is the learning rate. 

%To make the gradient implement easily, it usually uses the objective function:
%\begin{equation} \label{pg}
%L^{PG}(\theta) = \mathbb{E}_{t}[log\pi_{\theta}(a_{t}|s_{t})A_{t}]
%\end{equation} replaces the objective function\ref{objective}. They have the same gradient.

Trust Region Policy Optimization (TRPO)~\cite{schulman2015trust} is proposed to achieve monotonic improvement when updating policy. 
The objective is approximated with a surrogate function, and the step size is limited by the Kullback Leibler (KL) divergence~\cite{kullback1951information}, shown as follows. 
%The main issue of the traditional policy gradient method is that it may update the policy with a large step which will leading to catastrophic performance. 
%To limit the updated step size, in \cite{schulman2015trust},  TRPO which uses a surrogate objective and restrict the step size with the Kullback Leibler(KL) divergence between old policy and new policy.
\begin{equation*}\label{trpo}
    \centering
    \begin{split}
     \max_{\theta}~L^{TRPO}(\theta) = \mathbb{E}_{t}[\frac{\pi_{\theta}(a_{t}|s_{t})}{\pi_{\theta_{old}}(a_{t}|s_{t})}A_{t}]\\
    s.t.~~~\mathbb{E}_{t}[KL[\pi_{\theta_{old}}(a_{t}|s_{t}), \pi_{\theta}(a_{t}|s_{t})]] \leq \delta .
\end{split}
\end{equation*}
where $\delta$ is the step size limitation.

TRPO can be approximately solved with a conjugate gradient optimization, which is efficient. %But its computation and implementation is complicated.
%However, the computation complexity is high and it is hard to implement. 

%TRPO converts the unconstrained optimization problem (equation\ref{pg}) to a constrained optimization problem. To solve it, it makes a linear approximation to the objective and a quadratic approximation to the KL divergence constraint and be solved using the conjugate gradient. Although TRPO has achieved great and consistent high performance, the computation and implementation of it is extremely complicated. The CPO is based on TRPO.

  Proximal Policy Optimization (PPO)~\cite{schulman2017proximal} approximates the objective by a first-order surrogate optimization problem to reduce the complexity of TRPO, defined as 
 % where they use the clipped surrogate objective to get rid of the computation created by constrained optimization. 
%Instead of adding a constraint, PPO slightly modifies TRPO objective function with a penalty for having a too large policy update. 

\begin{equation}
   \begin{aligned} \label{PPO}
  \max_{\theta}~&L^{CLIP}(\theta )  \\
 = &~\mathbb{E}_{t}[\min(r_{t}(\theta)A_{t},
\clip(r_{t}(\theta),1,1-\epsilon ,1+\epsilon)A_{t})], 
\end{aligned} 
\end{equation}
where $r_t(\theta) = \frac{\pi_{\theta}(a_{t}|s_{t})}{\pi_{\theta_{old}}(a_{t}|s_{t})}$, $A_{t}$ is the advantage function, $\clip(\cdot)$ is the clip function and $r_{t}(\theta)$ is clipped between $\left [ 1-\epsilon, 1+\epsilon  \right ]$. 
%This indicates that if $r_{t}(\theta)$ causes the objective function to increase to a certain extent, its effectiveness will decrease (be clipped). 

\begin{figure}[t]
    \centering
    \includegraphics[width=0.9\columnwidth]{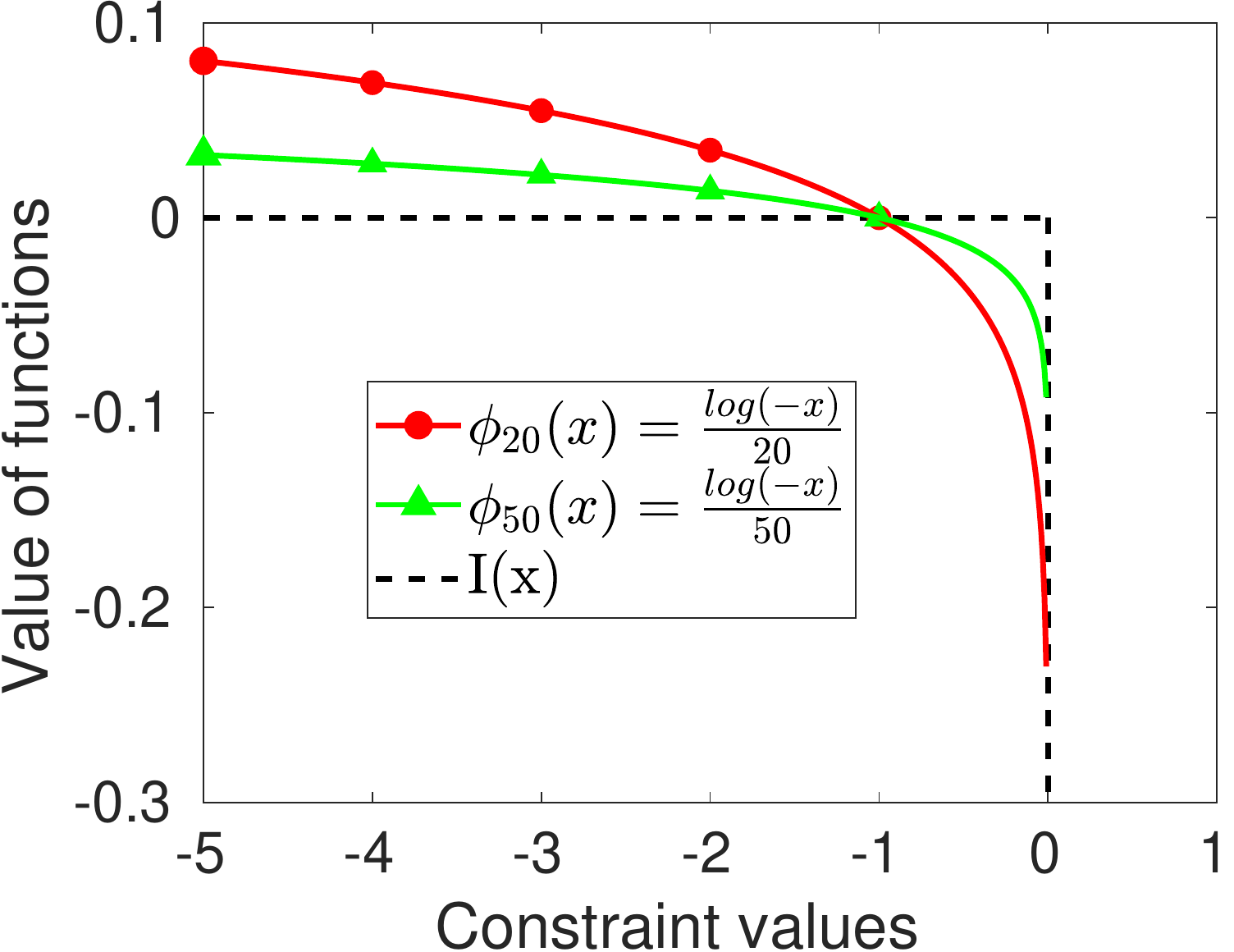}
    \vspace{1pt}
    \caption{Value of indicator function $I(x)$ and logarithmic barrier functions $\phi(x) = \frac{-log(-x)}{t}$. The dashed line is the indicator function and two solid lines are logarithmic barrier function with $t=20$ and $t=50$. We get better approximation with higher $t$ comparing these two solid lines.
    }
    \label{fig:indicator}
\end{figure}

\section{Interior-point Policy Optimization}\label{method}
Now we introduce our Interior-point Policy Optimization ({\IPO}) to solve CMDP. 
We employ the clipped surrogate objective of PPO in Eq. (\ref{PPO}) as our objective, and
augment it with the logarithmic barrier function for the constraints in the interior-point method.  
%Proximal Policy Optimization with Interior Point Method
%For CMDP, our goal is to maximize the objective function while satisfying certain constraints as defined in Eq. (\ref{objective_constraint}). 
%We propose an interior-point\cite{boyd2004convex} based policy gradient method (IPO) to solve it. In particular, our algorithm includes two components. For the objective function, we replace it with the PPO clipped surrogate objective (Eq. (\ref{PPO})). For the constrained term, we integrate it to the objective function by using an logarithmic barrier function.

Our problem is defined as 
\begin{equation}\label{PPO_constraints}
    \begin{split}
    \max_{\theta}~L^{CLIP}(\theta ), \\
    s.t.~~~J_{C_{i}}^{\pi_{\theta}} \leq \epsilon_{i} .
\end{split}
\end{equation}

\subsection{Logarithmic Barrier Function}
%To solve the constrained conditions, we reduce the constrained problem to an unconstrained problem where the first order optimization (e.g. Adam optimizer) can be applied. 
%The basic idea is we add an indicator function of constrains $I(J_{C_{i}}^{\pi})$ to the PPO clipped surrogate objective. The indicator function of constrains has the property that when the constraints $J_{C_{i}}^{\pi}$ satisfied the threshold $J_{C_{i}}^{\pi} \leq \epsilon_{i}$, the function value is zero; otherwise, when $J_{C_{i}}^{\pi} > \epsilon_{i}$, the function value is negative infinite. 

% we integrate constrains to the PPO clipped surrogate objective by using interior point method. Our goal is  where the first order optimization can be applied. The basic idea is we add a indicator function $I_{\_}()$ of the constraints $J_{C_{i}}^{\pi}$ to the clipped surrogate objective. The indicator function $I_{\_}(J_{C_{i}}^{\pi})$ has the property that when the constraints $J_{C_{i}}^{\pi}$ satisfied the threshold, the function value is zero, otherwise, the function value is infinite. 
Now we denote $\widehat{J}_{C_{i}}^{\pi_{\theta}} = J_{C_{i}}^{\pi_{\theta}} - \epsilon_{i}$, to simplify the notation. 
Our constrained optimization problem can be reduced to an unconstrained one 
by augmenting the objective with indicator functions $I(\widehat{J}_{C_{i}}^{\pi})$,  for each constraint $\widehat{J}_{C_{i}^{\pi}}$ satisfying 
\begin{equation*}
    I{(\widehat{J}_{C_{i}}^{\pi_{\theta}})} = \begin{cases}
 & 0\ \ \ \ \  \ \widehat{J}_{C_{i}}^{\pi_{\theta}}\leq 0,\\ 
 & -\infty\ \ \widehat{J}_{C_{i}}^{\pi_{\theta}}>  0. 
\end{cases}
\end{equation*}
It means that
when the constraints are satisfied, we solve the problem as an unconstrained policy optimization problem only considering the reward; 
%for the penalty from the constraints is $0$; 
however, when any constraint is violated, we must adjust the policy to satisfy the constraint first, since the penalty is $-\infty$. 
%primarily decrease the value of constraints to make it under the constraint thresholds.

The logarithm barrier function is a differentiable approximation of the indicator function, defined as
%To make the indicator function differentiable, we use the logarithmic barrier function 
\begin{equation*}\label{log}
    %\phi {(\widehat{J}_{C_{i}}^{\pi_{\theta}})} = \frac{log(-\widehat{J}_{C_{i}}^{\pi_{\theta}})}{t}, 
    \phi {(\widehat{J}_{C_{i}}^{\pi_{\theta}})} = \frac{\log(-\widehat{J}_{C_{i}}^{\pi_{\theta}})}{t},
\end{equation*}
where $t>0$ is a  hyperparameter. 
The larger $t$ is, the better the approximation is to the indicator function, as shown in Figure \ref{fig:indicator}. 
%In theory, we can achieve better approximation with higher value of $t$. 
%As shown in Figure~\ref{fig:indicator}, when the gap $J_{C_{i}}^{\pi_{\theta}}-\epsilon_{i}$ is less than zero, the value of indicator function $ I{(J_{C_{i}}^{\pi_{\theta}})}$ (dash line) is zero, otherwise, it's negative infinite.  The logarithmic barrier function $\phi{(J_{C_{i}}^{\pi_{\theta}})}$ (solid lines) can properly approximate the indicator function and remain to be differentiable. By comparing two solid lines ($t=20$ and $t=50$), we can achieve more accurate approximation with a higher value of $t$.

%By augmenting the PPO clipped surrogate objective with the logarithmic barrier function, 
Now our objective becomes 
\begin{equation} \label{IPO}
    \max_{\theta}~L^{IPO}(\theta), 
    %\max_{\theta}~L^{CLIP}(\theta )+\sum_{i=1}^{m}  \phi {(\widehat{J}_{C_{i}}^{\pi_{\theta}})}.  %\frac{log(-(J_{C_{i}}^{\pi_{\theta}}-\epsilon_{i}))}{t}.  
\end{equation}
where $$L^{IPO}(\theta)=L^{CLIP}(\theta )+\sum_{i=1}^{m}  \phi {(\widehat{J}_{C_{i}}^{\pi_{\theta}})}.$$
%where $J_{C_{i}}^{\pi_{\theta}}$ is general cumulative constraints including discounted cumulative constraints and mean valued constraints. 

%Because $t_{i}$ is a hyperparameter indicating how well the logarithmic barrier function $ \phi {(J_{C_{i}}^{\pi_{\theta}})}$ can approximate the indicator function $ I{(J_{C_{i}}^{\pi_{\theta}})}$. We can simply keep $t_{i}$ to be consistent for all constraints which means the approximation accuracy remaining the same.  
%Simplify the Eq. (\ref{PPO_IP}) by multiplying it by t (same t for all constraints), we get 
%For a fixed $t$,  by multiplying $t$ to Eq. (\ref{PPO_IP}), our objective is 
%\begin{equation} \label{IPO}
%\begin{aligned}
%    \max_{\theta}~L^{IPO}(\theta), 
%\end{aligned}
%\end{equation}
%where we define $$L^{IPO}(\theta)
%     =t \times L^{CLIP}(\theta )+\sum_{i=1}^{m} log(-\widehat{J}_{C_{i}}^{\pi_{\theta}}). $$
Thereafter, we can perform first order optimization (e.g. Adam optimizer) to update the parametric policy (e.g. neural network). The pseudo-code of {\IPO} is shown in Algorithm \ref{alg:implementation}.
\begin{algorithm}[t]
\caption{The procedure of {\IPO}} \label{alg:implementation}
\hspace*{0.02in} {\bf Input:} 
Initialize policy $\pi$ with parameter $\theta=\theta_{0}$. Set the hyperparameter  $r$ for PPO clip rate and $t$ for logarithmic barrier function\\
\hspace*{0.02in} {\bf Output:}
The policy parameters $\theta$
\begin{algorithmic}[1]
\STATE Initialize the computational graph structure.
\FOR {iteration k=0,1,2,...}
\STATE Sample N trajectories $\tau_{1}, ..., \tau_{N}$ including observations, actions, rewards and  costs under the current policy $\theta_{k}$ 
\STATE Process the trajectories to advantages, constraint values, etc
\STATE Update the policy parameter with first order optimizer $\theta_{k+1} = \theta_{k} +  \alpha  \bigtriangledown_{\theta} L^{IPO}(\theta) $ where $\alpha$ is learning rate based on the processed trajectories.
\ENDFOR
\RETURN policy parameters $\theta=\theta_{k+1}$
\end{algorithmic}
\end{algorithm}

\subsection{Performance Guarantee Bound} %\jiaxin{Double check this part}
% our method is the solution to the dual problem, we can prove that the upper bound between our result to the optimal is $m/t$, where m is the number of constraints and t is the parameter in our method.

% Theoretically, the $t$ is the larger the better. In practice, we can't set $t$ to be a large number, which will make the policy ignore the constrains. \textbf{Can we get bounds of reasonable $t$? If not, we may need to think about how to adjust $t$ systematically.}

%\jiaxin{Do we need different $t_i$?}
\begin{theorem}
\label{thm:IPO}
  The maximum gap between the optimal value of the constrained optimization problem in Eq. (\ref{PPO_constraints})  and the objective of {\IPO} in Eq. 
  (\ref{IPO}) is bounded by $\frac{m}{t}$, where $m$ is the number of constraints and $t$ is the hyperparameter of logarithmic barrier function, if the optimal policy is strictly feasible.
\end{theorem}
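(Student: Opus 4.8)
The plan is to follow the classical duality-gap analysis of the log-barrier / interior-point method (Boyd and Vandenberghe), transplanted to the maximization form of Eq. (\ref{PPO_constraints}). First I would treat the constrained program as a concave maximization: the reward surrogate $L^{CLIP}(\theta)$ is maximized subject to the $m$ convex constraints $\widehat{J}_{C_{i}}^{\pi_{\theta}}\le 0$. I would then observe that maximizing $L^{IPO}(\theta)=L^{CLIP}(\theta)+\sum_{i=1}^{m}\phi(\widehat{J}_{C_{i}}^{\pi_{\theta}})$ is, after multiplying by the positive constant $t$, exactly the standard log-barrier subproblem $\max_{\theta}\ t\,L^{CLIP}(\theta)+\sum_{i}\log(-\widehat{J}_{C_{i}}^{\pi_{\theta}})$, so its maximizer $\theta^{\ast}$ is the central-path point associated with parameter $t$. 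Strict feasibility of the optimum guarantees $\widehat{J}_{C_{i}}^{\pi_{\theta^{\ast}}}<0$ for every $i$, so the barrier terms are finite and $\theta^{\ast}$ is well defined.

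Next I would write the first-order stationarity condition for $L^{IPO}$ at $\theta^{\ast}$. Using $\tfrac{d}{dx}\log(-x)=\tfrac{1}{x}$, stationarity reads
\[
\nabla L^{CLIP}(\theta^{\ast})+\sum_{i=1}^{m}\frac{1}{t\,\widehat{J}_{C_{i}}^{\pi_{\theta^{\ast}}}}\,\nabla\widehat{J}_{C_{i}}^{\pi_{\theta^{\ast}}}=0.
\]
This motivates defining the dual variables $\lambda_{i}^{\ast}=-\big(t\,\widehat{J}_{C_{i}}^{\pi_{\theta^{\ast}}}\big)^{-1}$, which are strictly positive precisely because each constraint is strictly satisfied. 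With these multipliers the stationarity condition becomes $\nabla_{\theta}\big(L^{CLIP}(\theta)-\sum_{i}\lambda_{i}^{\ast}\widehat{J}_{C_{i}}^{\pi_{\theta}}\big)\big|_{\theta^{\ast}}=0$, so $\theta^{\ast}$ is a critical point of the Lagrangian $\mathcal{L}(\theta,\lambda^{\ast})=L^{CLIP}(\theta)-\sum_{i}\lambda_{i}^{\ast}\widehat{J}_{C_{i}}^{\pi_{\theta}}$, and under concavity it is the global maximizer.

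I would then evaluate the dual function $g(\lambda)=\max_{\theta}\mathcal{L}(\theta,\lambda)$ at $\lambda^{\ast}$. The key algebraic simplification is the complementary product $\sum_{i}\lambda_{i}^{\ast}\widehat{J}_{C_{i}}^{\pi_{\theta^{\ast}}}=\sum_{i}(-1/t)=-m/t$, which gives $g(\lambda^{\ast})=\mathcal{L}(\theta^{\ast},\lambda^{\ast})=L^{CLIP}(\theta^{\ast})+\tfrac{m}{t}$. Weak duality for the maximization problem yields $g(\lambda^{\ast})\ge p^{\ast}$, where $p^{\ast}$ denotes the optimal value of Eq. (\ref{PPO_constraints}); hence $p^{\ast}-L^{CLIP}(\theta^{\ast})\le m/t$. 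Since $\theta^{\ast}$ is itself feasible, $L^{CLIP}(\theta^{\ast})\le p^{\ast}$, and the two estimates combine to the claimed two-sided bound $0\le p^{\ast}-L^{CLIP}(\theta^{\ast})\le m/t$.

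I expect the main obstacle to be justifying the equality $g(\lambda^{\ast})=\mathcal{L}(\theta^{\ast},\lambda^{\ast})$, i.e. that the stationary point $\theta^{\ast}$ truly attains the maximum of the Lagrangian rather than being merely critical; this is exactly where concavity of $L^{CLIP}$ together with convexity of the constraints $\widehat{J}_{C_{i}}^{\pi_{\theta}}$ becomes indispensable, since otherwise the chain $p^{\ast}\le g(\lambda^{\ast})$ could fail to pin down $L^{CLIP}(\theta^{\ast})$. For neural-network policies these convexity hypotheses do not hold globally, so I would present the bound either as a statement about the idealized convex program or as a local result near $\theta^{\ast}$, and I would flag strict feasibility as the Slater-type condition guaranteeing that the multipliers $\lambda^{\ast}$ are finite and positive.
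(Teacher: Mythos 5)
Your proposal follows essentially the same route as the paper's proof: the first-order stationarity condition of the barrier objective, the choice $\lambda_{i}^{*} = -1/(t\,\widehat{J}_{C_{i}}^{\pi_{\theta^{*}}})$, evaluation of the dual at $\lambda^{*}$ producing the $-m/t$ complementary term, and weak duality — the paper merely works with the negated minimization problem, which is a cosmetic difference of sign convention. Your two additions (the lower bound $0 \le p^{*} - L^{CLIP}(\theta^{*})$ from feasibility of $\theta^{*}$, and the explicit flag that concavity of $L^{CLIP}$ and convexity of the constraints are needed for the stationary point to globally optimize the Lagrangian) are refinements of a step the paper asserts without justification, not a departure from its argument.
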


\begin{proof}
%Let $\widehat{J}_{C_{i}}^{\pi_{\theta}} = J_{C_{i}}^{\pi_{\theta}} - \epsilon_{i}$. 
To be consistent with the standard optimization problem, we first convert our maximization problem to 
a minimization problem by obtaining its negation. 
The problem defined in Eq. (\ref{PPO_constraints})  is now 
\begin{equation}\label{objective-zero}
    \begin{split}
    \min_{\theta} -L^{CLIP}(\theta), \\
    s.t.~~~\widehat{J}_{C_{i}}^{\pi_{\theta}} \leq 0. 
\end{split}
\end{equation}
The objective of {\IPO} in Eq. (\ref{IPO}) becomes
\begin{equation} \label{IPO_zero}
    \min_{\theta} -L^{CLIP}(\theta )-\sum_{i=1}^{m}\frac{log(-\widehat{J}_{C_{i}}^{\pi_{\theta}})}{t}. 
\end{equation}
The Lagrangian function of Eq. (\ref{objective-zero}) is
\begin{equation}\label{primal}
    L(\theta, \lambda_{i}) = -L^{CLIP}(\theta ) + \sum_{i=1}^{m} \lambda_{i}\widehat{J}_{C_{i}}^{\pi_{\theta}},
\end{equation}
where $\lambda_{i} \geq 0$ is the Lagrange multiplier. 

The dual function is
\begin{equation}\label{dual}
    g(\lambda_{i}) = \underset{\theta}{\min} -L^{CLIP}(\theta ) + \sum_{i=1}^{m} \lambda_{i}\widehat{J}_{C_{i}}^{\pi_{\theta}}. 
\end{equation}

If the problem is strictly feasible which means an optimal parameter $\theta^{*}$  for Eq. (\ref{IPO_zero}) exists and $\widehat{J}_{C_{i}}^{\pi_{\theta^*}} < 0$.  
%We set the optimal value of objective $ -L^{CLIP}(\theta)$ to be
%Let $$ p^{*} =  -L^{CLIP}(\theta^{*})$$. 
%Let the dual feasible point equals $\lambda_{i}^{*}$ and the optimal value of the dual problem to be $d^{*}$.
The optimal parameter $\theta^*$ %for Eq. (\ref{IPO_zero}) 
must satisfy
\begin{equation}\label{optimal_consition}
    -\bigtriangledown L^{CLIP}(\theta^{*}) + \sum_{i=1}^{m} \frac{1}{-t \times \widehat{J}_{C_{i}}^{\pi_{\theta^{*}}}} \bigtriangledown \widehat{J}_{C_{i}}^{\pi_{\theta^{*}}}=0. 
\end{equation}

% Then the Karush–Kuhn–Tucker (KKT) conditions holds:
% \begin{gather} \label{KKT}
% -\bigtriangledown L^{CLIP}(\theta^{*}) + \sum_{i=1}^{m} \lambda_{i}^{*}\bigtriangledown \widehat{J_{C_{i}}^{\pi}}=0\\
% \widehat{J_{C_{i}}^{\pi}} \leq 0\\
% \lambda_{i}^{*} \geq 0\\
% \lambda_{i}^{*}\widehat{J_{C_{i}}^{\pi}} = 0
% \end{gather}

%We assume the Eq. (\ref{IPO_zero}) is feasible. For given $t>0$, let the optimal point of Eq. (\ref{IPO_zero}) to be $\theta^{*}(t)$. Regarding that if $\widehat{J}_{C_{i}}^{\pi_{\theta^{*}(t)}}=0, i =1,...,m$, the function $\frac{log(-\widehat{J}_{C_{i}}^{\pi_{\theta}})}{t}$ is negative infinite which leads the Eq. (\ref{IPO_zero}) unfeasible, so we have $\widehat{J}_{C_{i}}^{\pi_{\theta^{*}(t)}}<0, i =1,...,m$. We call the optimal point $\theta^{*}(t)$ as interior point because the optimal value is achieved under a certain $t$. 

% \begin{itemize}
%     \item If $\exists~\widehat{J_{C_{i}}^{\pi_{\theta^{*}(t)}}}=0$, the objective function in Eq. (\ref{IPO_zero}) comes to infinity which is not feasible to minimize. 
%     \item If $\widehat{J_{C_{i}}^{\pi_{\theta^{*}(t)}}}<0, i =1,...,m$

%We claim that, every interior point $\theta^{*}$ yields a dual feasible point of the dual function Eq. (\ref{dual}), and hence a lower bound on the optimal value $p*$ of the primal problem.

We set
\begin{equation}\label{lamda}
    \lambda_{i}^{*} = -\frac{1}{t\times\widehat{J}_{C_{i}}^{\pi_{\theta^{*}}}}, 
\end{equation}
 and plug  $\lambda_{i}^{*}$ into Eq. (\ref{optimal_consition}). We obtain 
\begin{equation}
    -\bigtriangledown L^{CLIP}(\theta^{*}) + \sum_{i=1}^{m} \lambda_{i}^{*} \bigtriangledown \widehat{J}_{C_{i}}^{\pi_{\theta^{*}}}=0
\end{equation}
It means that $\theta^{*}$ minimizes the Lagrangian Eq. (\ref{primal}) under $\lambda_{i} = \lambda_{i}^{*}$. 
That is, 
\begin{equation}
\begin{aligned}
 g(\lambda_{i}^{*}) & = -L^{CLIP}(\theta^{*}) + \sum_{i=1}^{m} \lambda_{i}^{*}\widehat{J}_{C_{i}}^{\pi_{\theta^{*}}}\\
    &= -L^{CLIP}(\theta^{*}) - \frac{m}{t}
\end{aligned}
\end{equation} 
%By taking the property of duality gap, we get that
%\begin{equation}
%    d^{*} = -L^{CLIP}(\theta^{*}(t)) - \frac{m}{t} \leq p^{*}
%\end{equation}
%then
Let $p^*$ be the optimal value in the problem Eq. (\ref{objective-zero}). By the property of duality gap, 
$p^* \geq g(\lambda^*).$ Therefore, 

\begin{equation}
    -L^{CLIP}(\theta^{*}) -  p^{*} \leq \frac{m}{t}. 
\end{equation}
% \end{itemize}
%where m is the number of constrains and $t$ is the hyper-parameter of logarithmic barrier function. 
It means the gap between the optimal value of the original constrained problem with clipped surrogate function (Eq. (\ref{PPO_constraints})) and {\IPO} (Eq. (\ref{IPO})) is bounded by $\frac{m}{t}$.
\end{proof}

%\jiaxin{We need a worst-case constraint violation as in CPO. }

%\ys{Try to explain why our hyper-parameter is easy to adjust} 

%\subsection{The choice of hyperparameter $t$}
Theorem \ref{thm:IPO} indicates that a larger $t$ provides a better approximation of the original objective. 
Empirically, we notice that a larger $t$ can lead to a higher reward and cost, but at a lower convergence rate. 
%However, we notice that the larger the $t$ is, the slower the algorithm converges. 
%We need carefully choose $t$ to achieve better performance. 
This monotonicity enables us to employ a binary search algorithm to find a $t$ to balance the convergence rate and optimization.

\begin{figure}[t]
     \centering
     \begin{minipage}{\columnwidth}
        \centering
        Point Gather with discounted cumulative constraint
    \end{minipage}
     \vspace{5pt}
     \begin{subfigure}[t]{0.47\columnwidth}
         \centering
         \includegraphics[width=\textwidth]{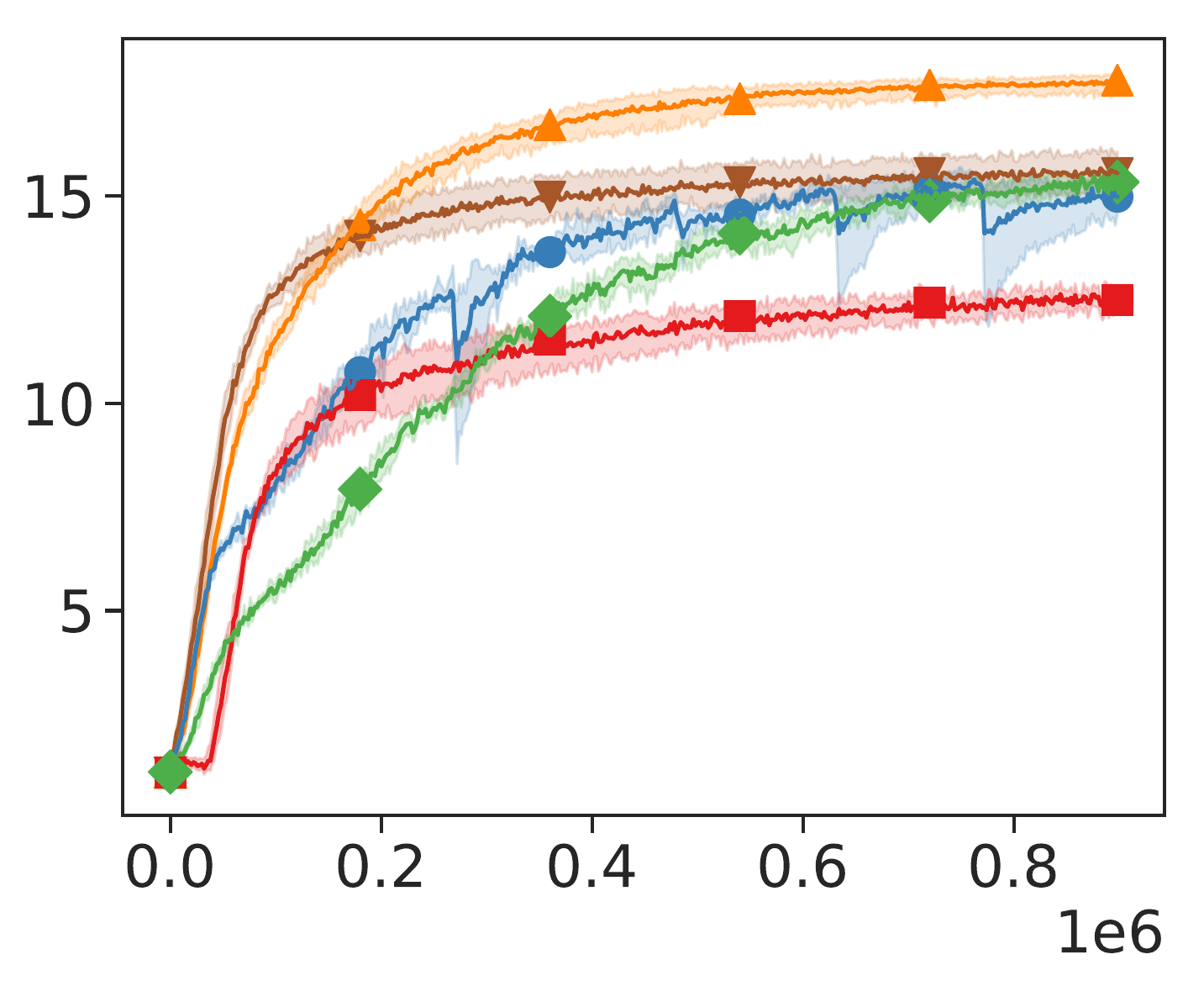}
         \caption{Reward}
         \label{fig:gather_reward}
     \end{subfigure}
     \hfill
     \begin{subfigure}[t]{0.47\columnwidth}
         \centering
         \includegraphics[width=\textwidth]{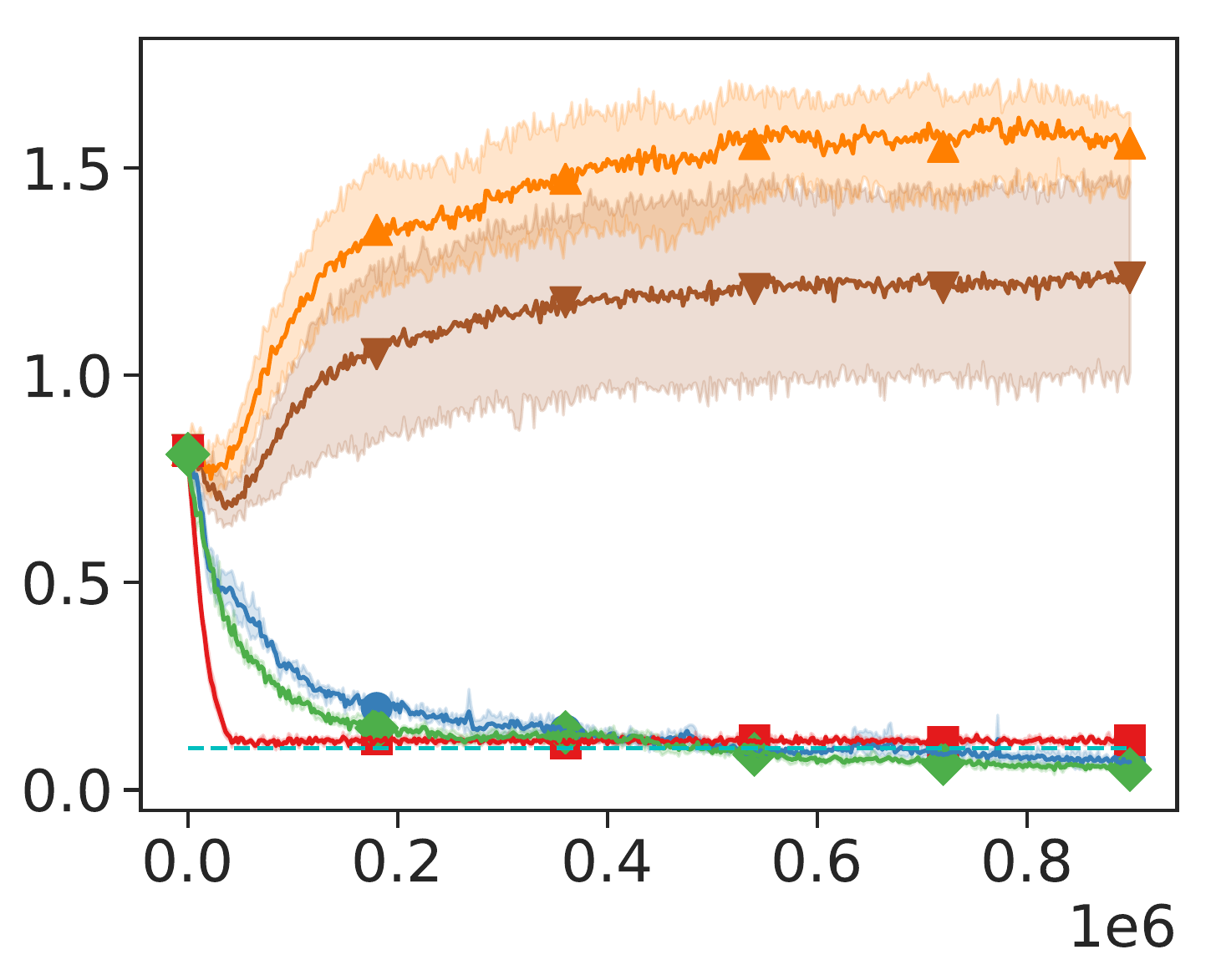}
         \caption{Constraint}
         \label{fig:gather_cost}
     \end{subfigure}
     
     \begin{minipage}{\columnwidth}
        \centering
        Point Circle with discounted cumulative constraint
    \end{minipage}
     \vspace{5pt}
     \begin{subfigure}[t]{0.47\columnwidth}
         \centering
         \includegraphics[width=\textwidth]{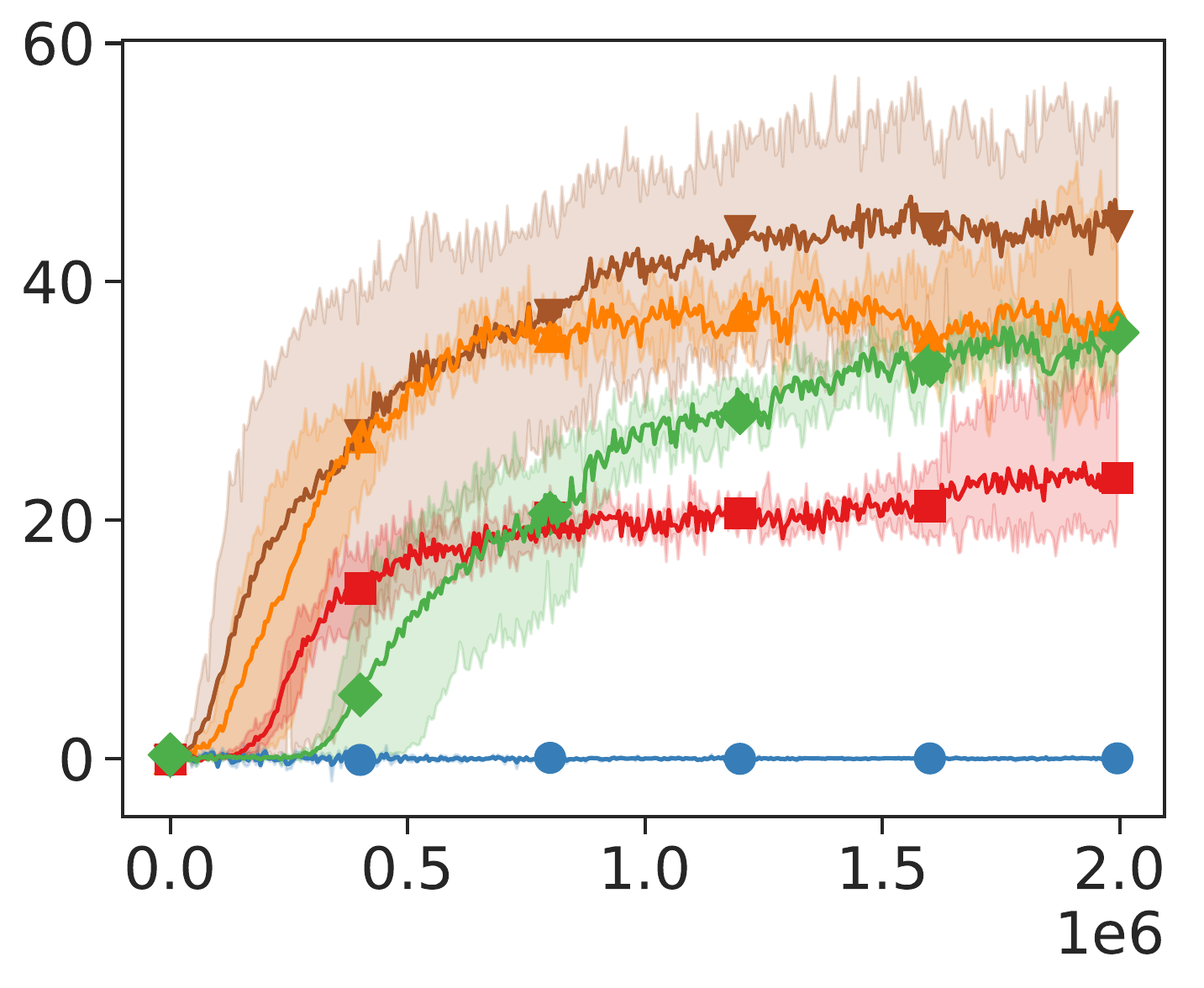}
         \caption{Reward}
         \label{fig:circle_reward}
     \end{subfigure}
     \hfill
     \begin{subfigure}[t]{0.47\columnwidth}
         \centering
         \includegraphics[width=\textwidth]{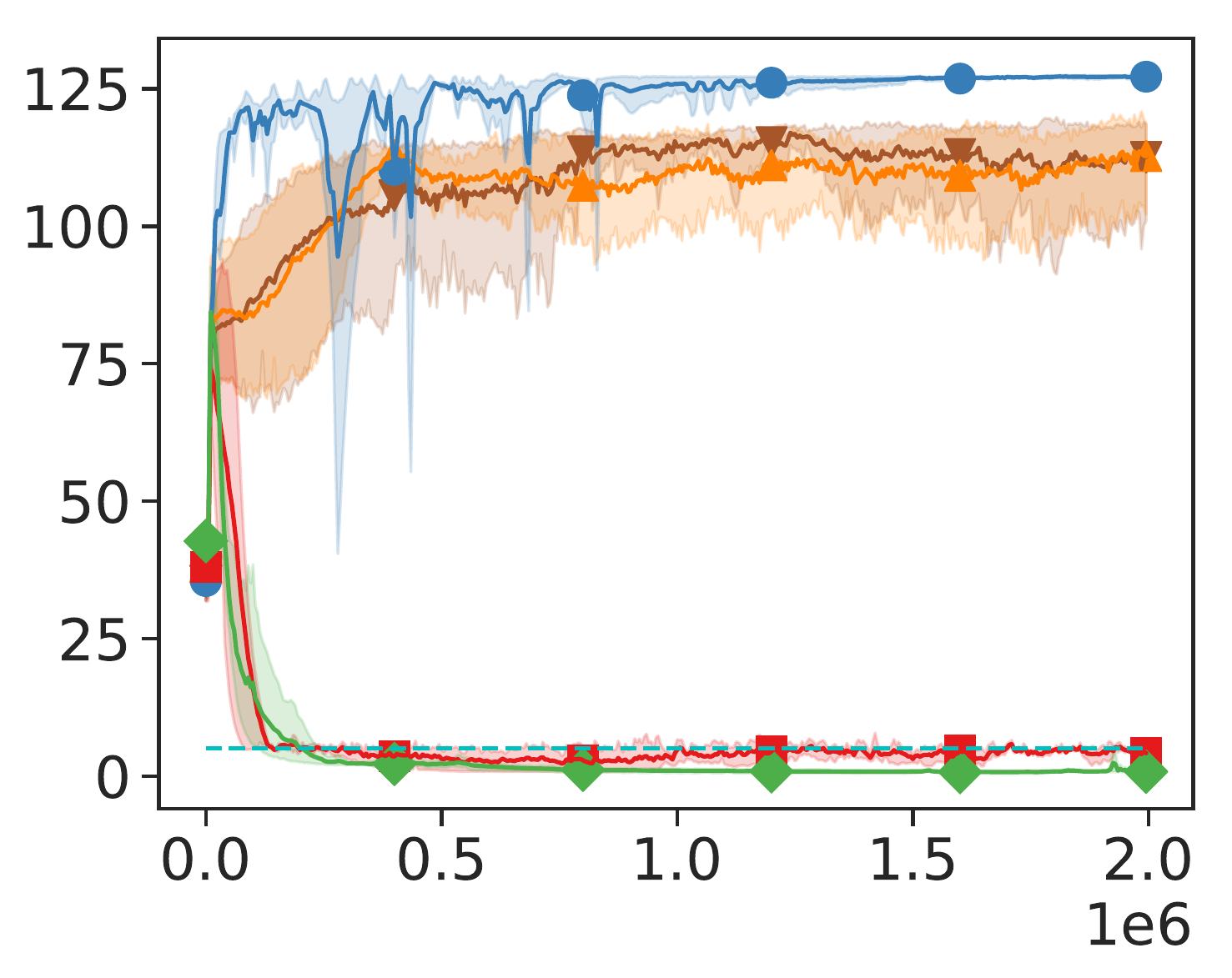}
         \caption{Constraint}
         \label{fig:circle_cost}
     \end{subfigure}
     
    \begin{minipage}{\columnwidth}
        \centering
        HalfCheetah-Safe with discounted cumulative constraint
    \end{minipage}
    \vspace{3pt}
     \begin{subfigure}[t]{0.47\columnwidth}
         \centering
         \includegraphics[width=\textwidth]{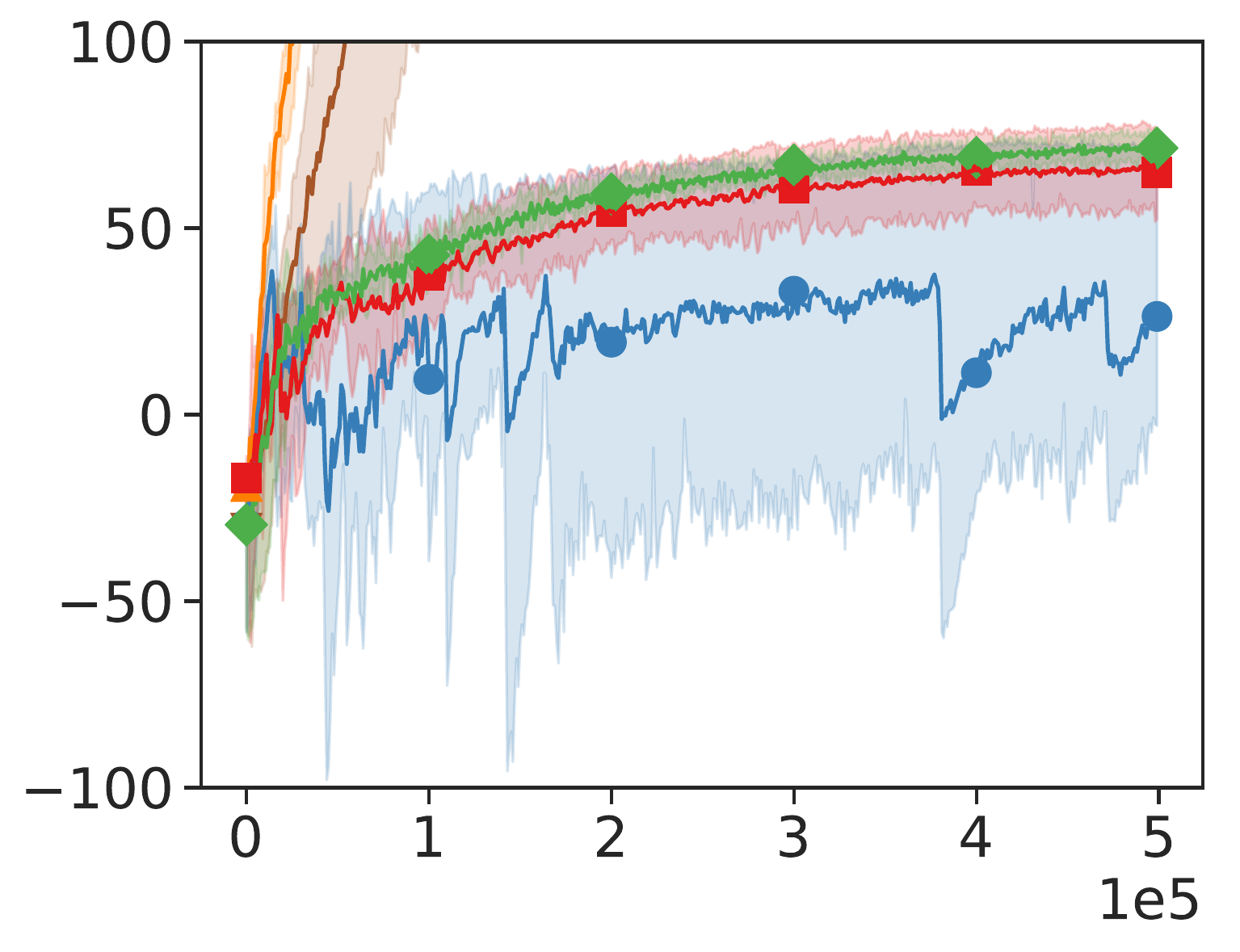}
         \caption{Reward}
         \label{fig:halfcheetah_reward}
     \end{subfigure}
     \hfill
     \begin{subfigure}[t]{0.47\columnwidth}
         \centering
         \includegraphics[width=\textwidth]{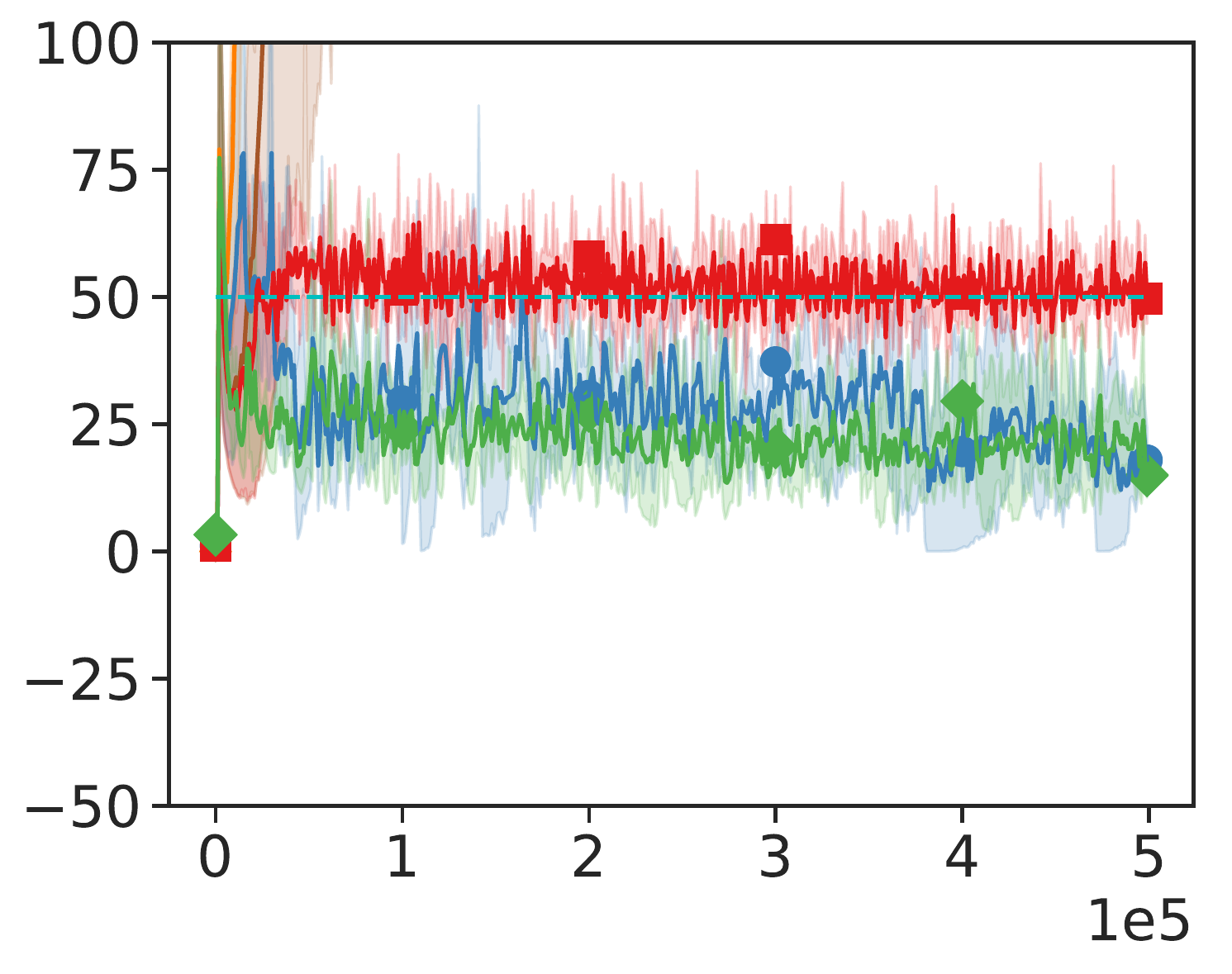}
         \caption{Constraint}
         \label{fig:halfcheetah_cost}
     \end{subfigure}
    \begin{minipage}[t]{0.9\columnwidth}
         \centering
         \includegraphics[width=\textwidth]{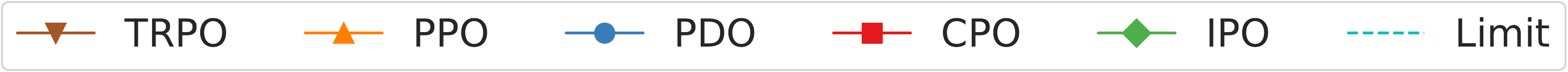}
    \end{minipage}
     \caption{Average performance of TRPO, PPO, PDO, CPO and IPO under Point Gather, Point Circle and HalfCheetah-Safe with discounted cumulative constraints. The x-axis is the number of trajectories. The dashed lines are constrained limits for different tasks which is $0.1$ for Point Gather, $5$ for Point Circle and $50$ for HalfCheetah-Safe.}
     \label{fig:discounted}
\end{figure}

\begin{figure}[t]
     \centering
     \begin{minipage}{\columnwidth}
        \centering
        Point Gather with mean valued constraint
    \end{minipage}
    \vspace{5pt}
     \begin{subfigure}[t]{0.47\columnwidth}
         \centering
         \includegraphics[width=\textwidth]{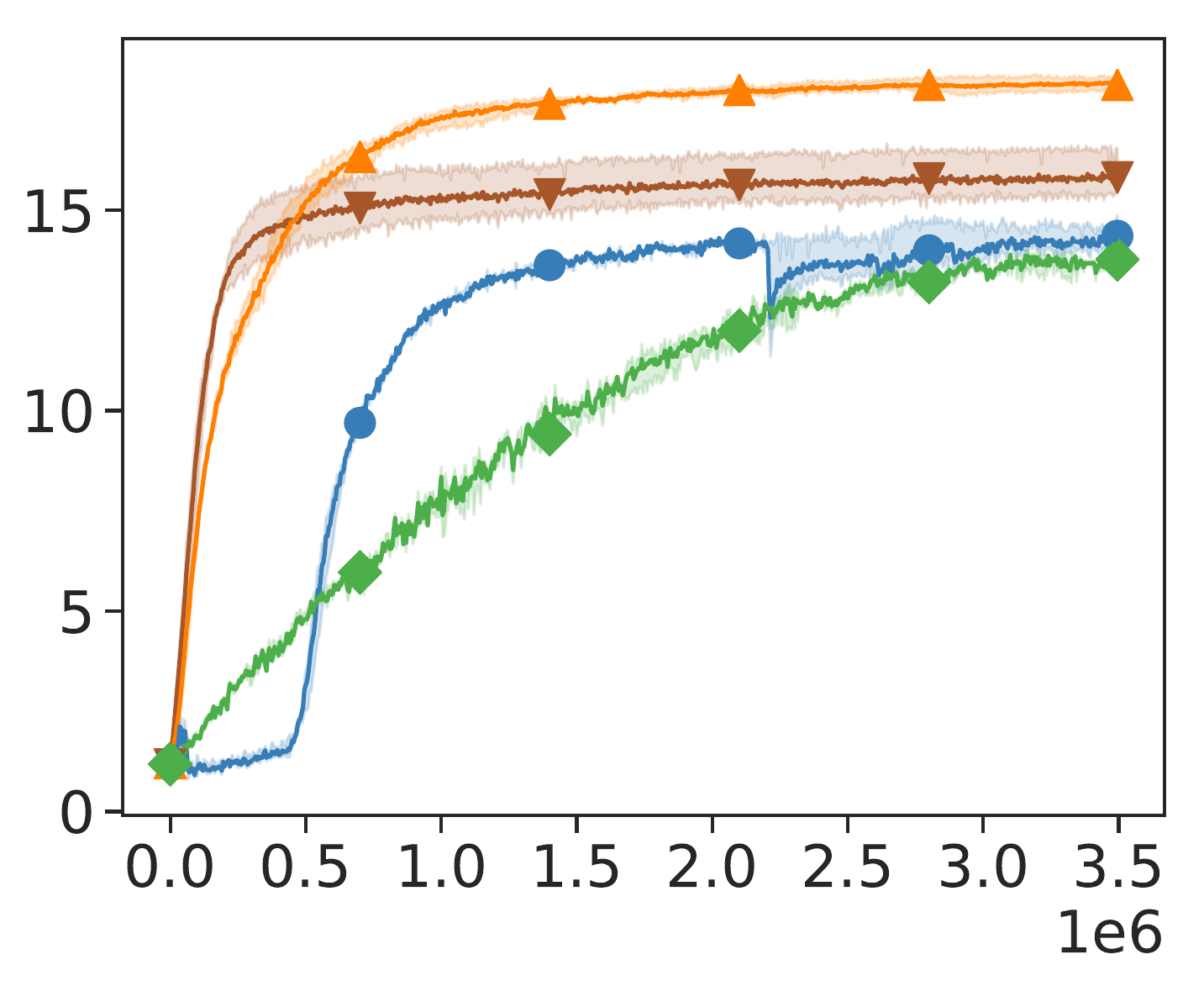}
         \caption{Reward}
         \label{fig:avegather_reward}
     \end{subfigure}
     \hfill
     \begin{subfigure}[t]{0.47\columnwidth}
         \centering
         \includegraphics[width=\textwidth]{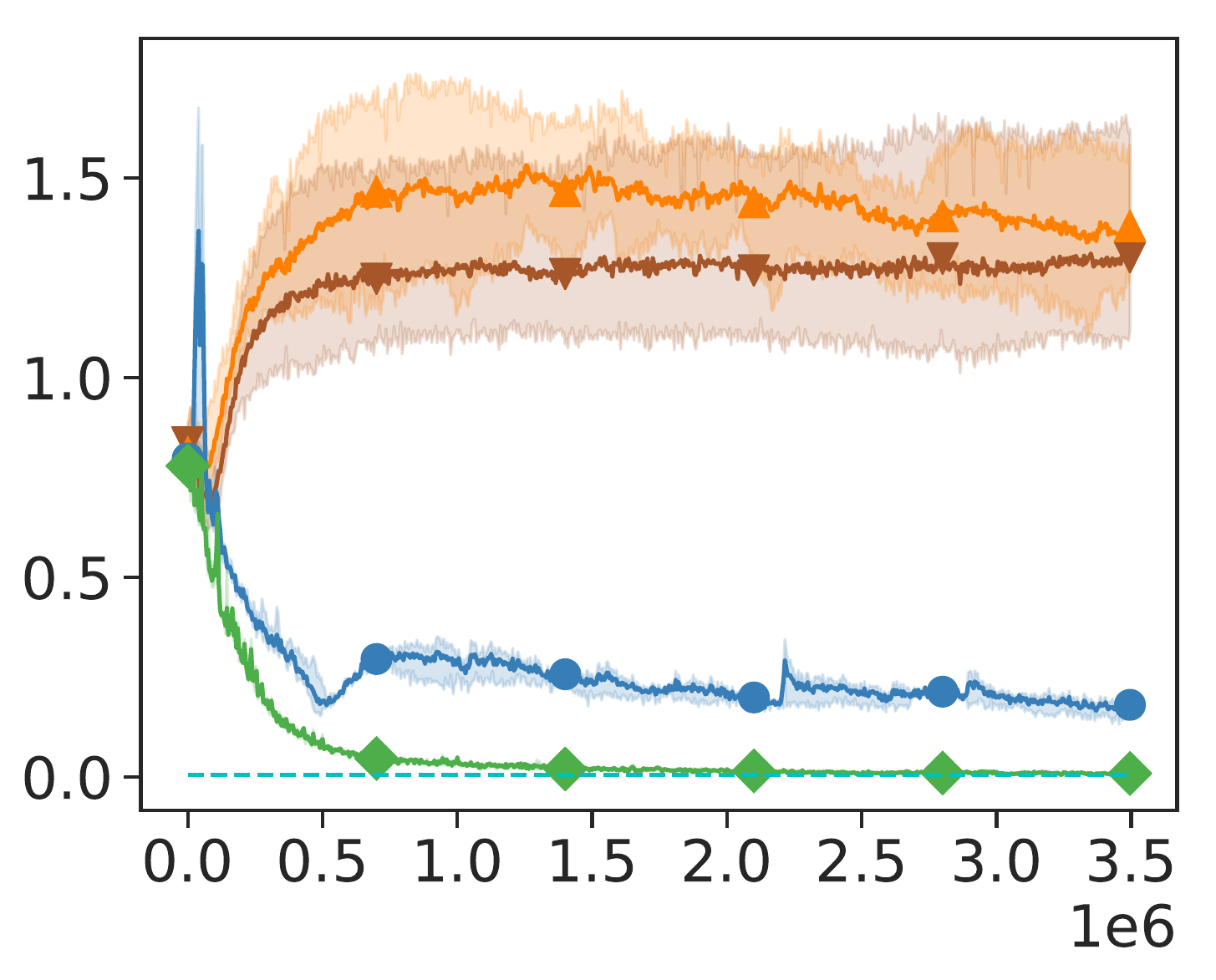}
         \caption{Constraint}
         \label{fig:avegather_cost}
     \end{subfigure}
     
     \begin{minipage}{\columnwidth}
        \centering
        Point Circle with mean valued constraint
    \end{minipage}
    \vspace{5pt}
     \begin{subfigure}[t]{0.47\columnwidth}
         \centering
         \includegraphics[width=\textwidth]{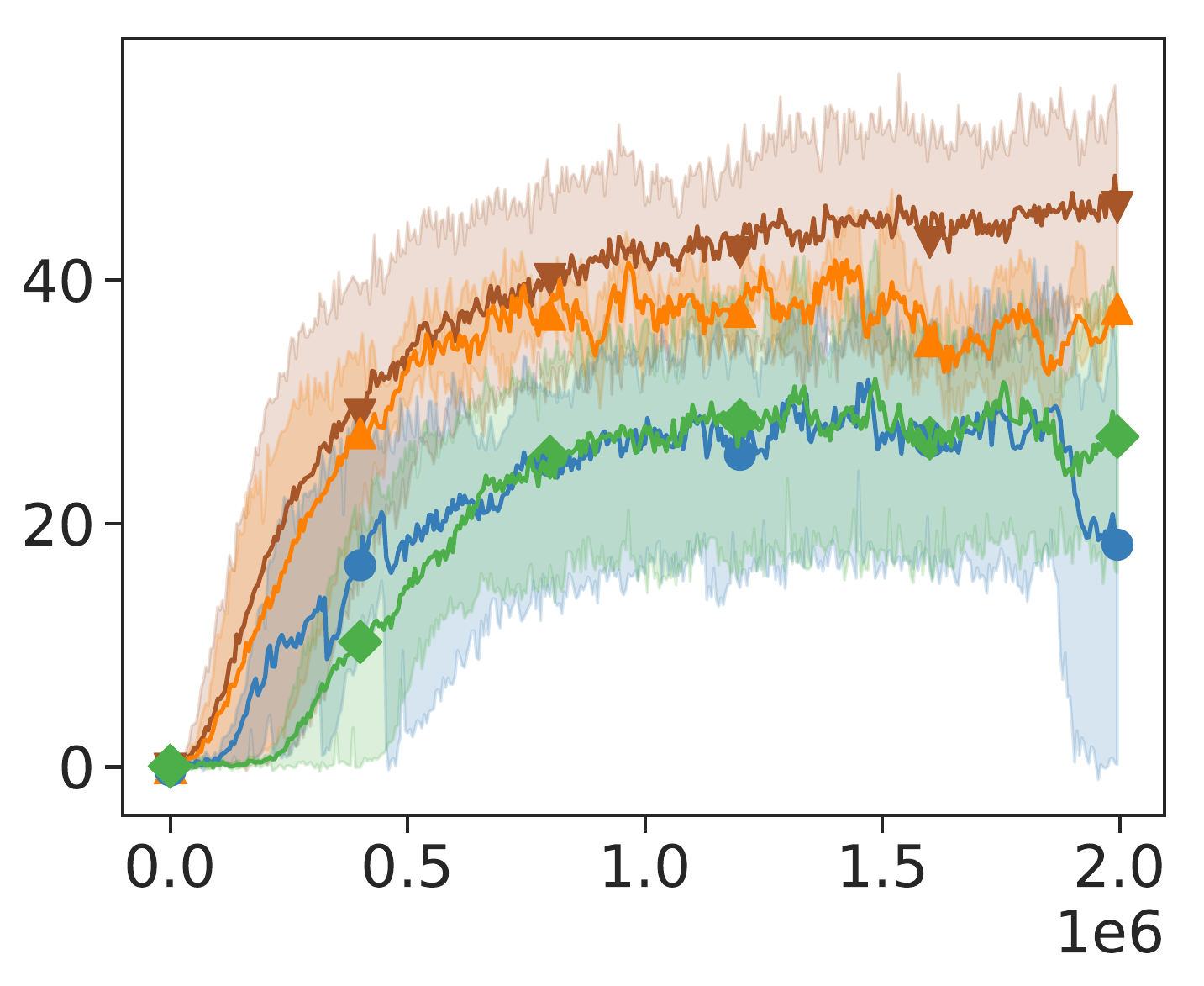}
         \caption{Reward}
         \label{fig:avecircle_reward}
     \end{subfigure}
     \hfill
     \begin{subfigure}[t]{0.47\columnwidth}
         \centering
         \includegraphics[width=\textwidth]{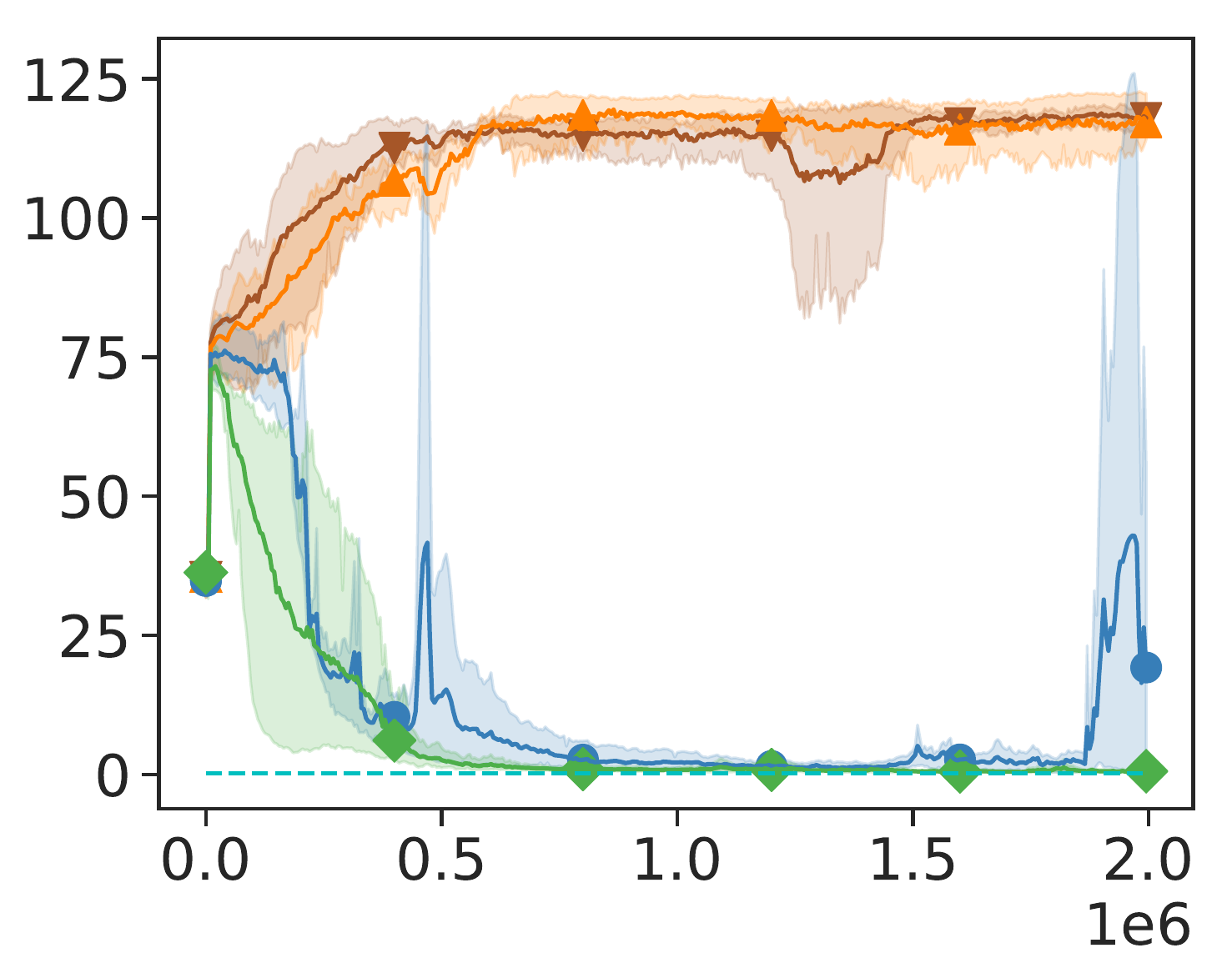}
         \caption{Constraint}
         \label{fig:avecircle_cost}
     \end{subfigure}
     
     \begin{minipage}{\columnwidth}
        \centering
        Mars Rover with mean valued constraint
    \end{minipage}
    \vspace{3pt}
     \begin{subfigure}[t]{0.47\columnwidth}
         \centering
         \includegraphics[width=\textwidth]{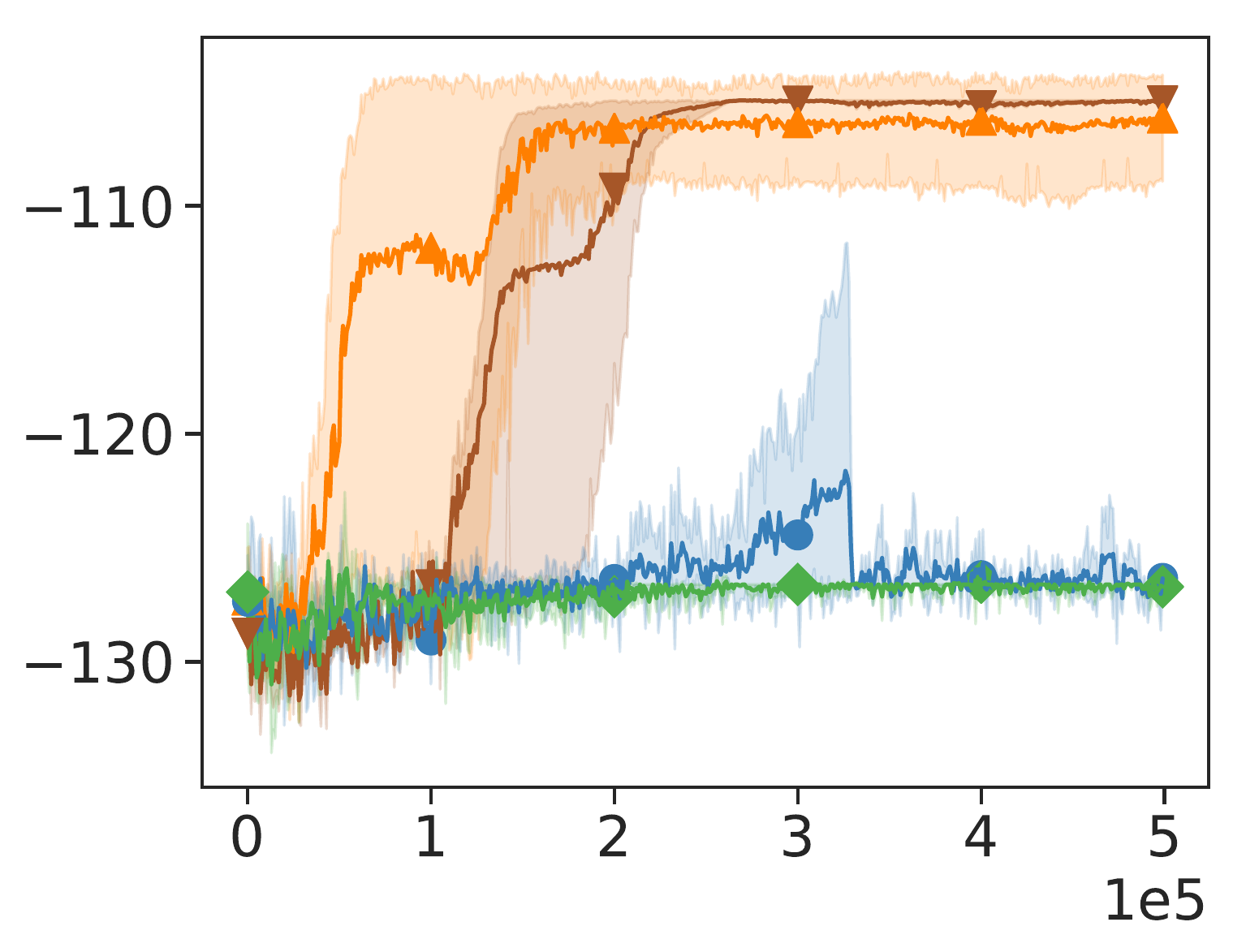}
         \caption{Reward}
         \label{fig:marsrover_reward}
     \end{subfigure}
     \hfill
     \begin{subfigure}[t]{0.47\columnwidth}
         \centering
         \includegraphics[width=\textwidth]{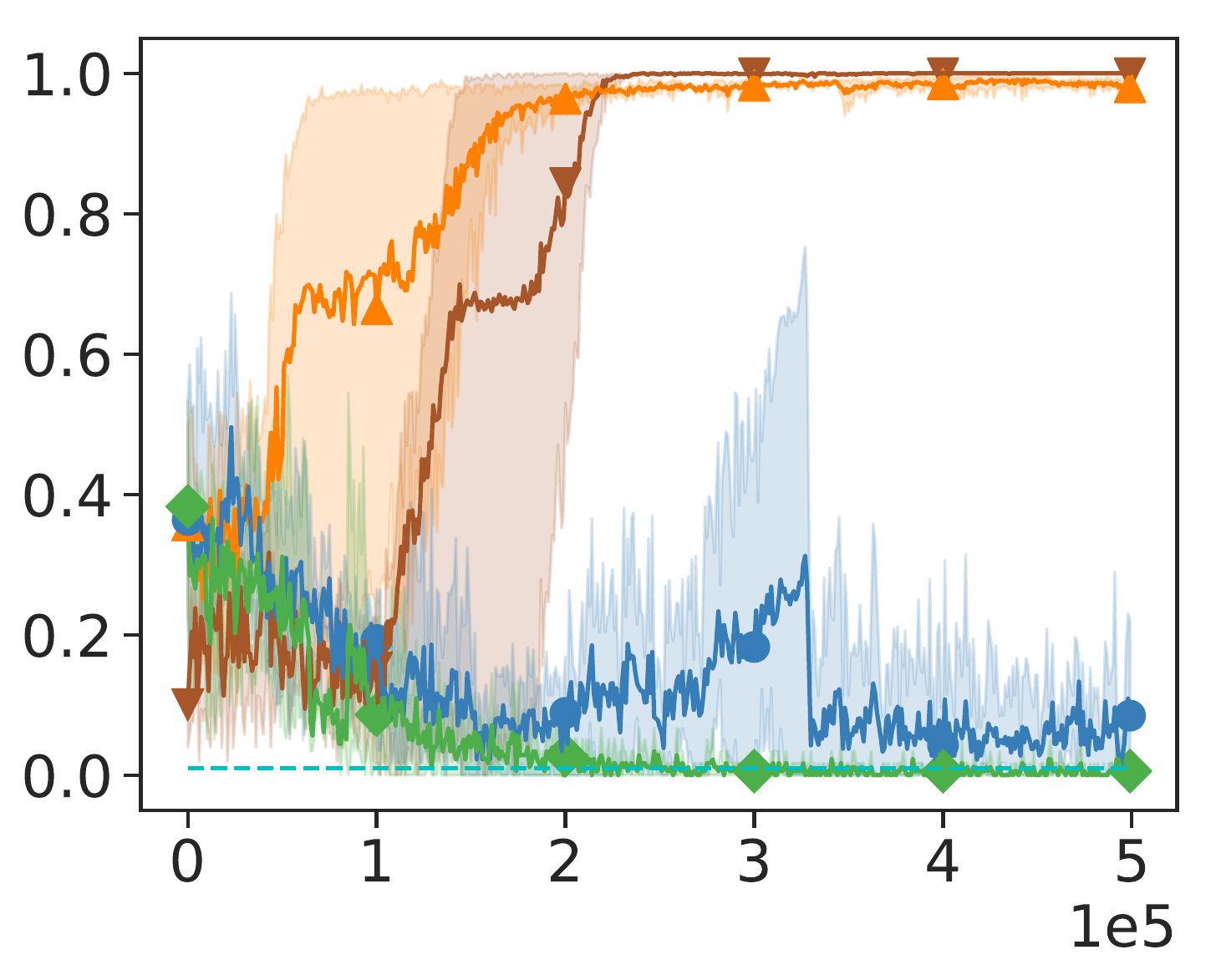}
         \caption{Constraint}
         \label{fig:marsrover_cost}
     \end{subfigure}
     
     \begin{minipage}[t]{0.9\columnwidth}
         \centering
         \includegraphics[width=\textwidth]{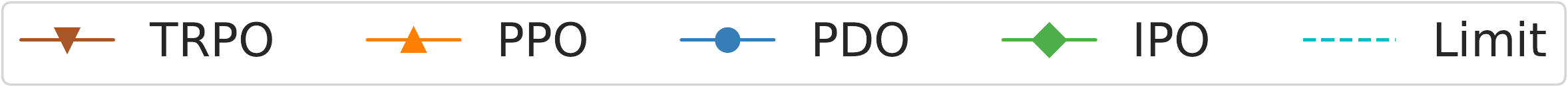}
    \end{minipage}
     
     \caption{Average performance of TRPO, PPO, PDO and IPO under Point Gather, Point Circle and Mars Rover with mean valued constraints.
    %  The x-axis is the number of trajectories. 
     The dashed lines are constrained limits for different tasks which is $0.005$ for Point Gather, $0.2$ for Point Circle and $0.01$ for Mars Rover.}
     \label{fig:mean}
\end{figure}

\section{Experiments}
% \begin{itemize}
% \item performance Comparison with current works on several simulated (MuJoCo) tasks (discounted accumulate constraints on gather and circle; mean valued constraints on Mars rover)
% \item validate our advantage/Problems of current works (CPO: search on the constrained boundary, primal-dual: intermediary policies are not guaranteed to satisfy constraints only the policy at convergence is. sensitive to the initialization of the dual variable); Visualizing the result in special cases
% \item take less batch size in each iteration
% \item takes fewer computation resources
% \item resist to noise environments
% \item Parameters adjustment
% \end{itemize}
In the experiment, we demonstrate the following properties of IPO: 
\begin{itemize}
    \item IPO can handle more general types of cumulative constraints including discounted cumulative constraints and mean valued constraints. It outperforms the state-of-the-art baselines, CPO~\cite{achiam2017constrained} and PDO~\cite{chow2017risk}, on both constraints.
    %\item CPO's solution is restricted by constraint limits. But IPO can get more optimal solutions.
    \item  IPO's hyperparameter is easy to tune, compared to PDO.
    \item IPO can be easily extended to handle optimizations with multiple constraints.
    \item IPO is robust in stochastic environments.  
\end{itemize}

We conduct experiments and compare IPO with CPO and PDO in various scenarios: three tasks in the Mujoco simulator (Point-Gather, Point-Circle~\cite{achiam2017constrained}, HalfCheetah-Safe~\cite{chow2019lyapunov}) and a grid-world task (Mars-Rover) inspired by~\cite{chow2015risk}. 

To be fair, the baseline algorithms inherit all advantages in IPO. For example, PDO is implemented with the same PPO clipped surrogate objective. Additionally, we demonstrate the performance of PPO and TRPO for reference, although these algorithms do not take constraints into consideration. 
We run each experiment ten times with different random seeds and show the average performance.

% Regarding the implementation detail, we take the implementation in rllab\footnote{https://github.com/jachiam/cpo}\cite{duan2016benchmarking} for CPO and we implement PDO with the PPO clipped surrogate objective and dual gradient descent. 
% Specifically, we update the Lagrange multiplier $\lambda$ with $\lambda_{k+1} = \lambda_{k} + \alpha(J_{C}^{\pi} - \epsilon)_{+}$ where $J_{C}^{\pi}$ is the constraint value, $\epsilon$ is the constraint limit and $\alpha$ is the Lagrange multiplier updated learning rate. 

\subsection{Scenario Description}
We first describe our experiment scenarios. 
The objective in all following scenarios is to maximize the reward (the higher the better) while satisfying the constraints (the lower the better).  
\begin{itemize}
\item Point-Gather: A Point agent moves in a fixed square region where there are randomly located apples ($2$ apples) and bombs ($8$ bombs). The agent is rewarded for collecting apples and there is a constraint on the number of bombs collected;
\item Point-Circle: A Point agent moves in a circular region with radius $r$. It's rewarded for running counter-clockwise along the circle, but is restricted to stay within a safe region, smaller than the circular region;
\item HalfCheetah-Safe: The HalfCheetah agent ($2$-legged simulated robot) is rewarded for running  with a speed limit at $1$, for stability and safety;
\item Grid-world: A rover travels in a fixed square region. It starts from the top left corner and its destination is the top right corner. The rover gets a negative reward for each step it moves.  There are fixed holes in the region. If the rover falls into a hole, the trip terminates.  
%Falling into a hole terminates the trip. 
The constraint is on the possibility of the rover falling into a hole.  
%The rover is constrained by the possibility of falling into a hole. 
\end{itemize}

\subsection{Discounted Cumulative Constraints}\label{exp:discounted}
First, 
% we show our performance in terms of discounted cumulative constraints. 
we demonstrate results on optimization with discounted cumulative constraints (Eq. (\ref{discounted})) on three Mujoco tasks (Point-Gather, Point-Circle, and HalfCheetah-Safe) to compare IPO with CPO and PDO.
% \begin{equation*}
%     J_{C}^{\pi} = \mathbb{E}_{\tau \sim \pi}[\sum_{t=0}^{\infty}\gamma^{t}\mathit{C}(s_{t},a_{t},s_{t+1})]
% \end{equation*}
% where $\gamma \in [0,1]$ is the discount factor, $\mathit{C}(s_{t},a_{t},s_{t+1})$ is a cost from state $s_{t}$ to $s_{t+1}$ with action $a_{t}$.

In Figure~\ref{fig:discounted}, we can see our overall performance is better than CPO and PDO under all the three Mujoco tasks.
IPO achieves higher discounted cumulative reward with lower discounted cumulative cost than CPO. 
CPO converges faster than IPO, but we notice that CPO always stops improving when the constraint is satisfied. 
%Although CPO can make sure the constraint value converges to the limit quickly, the constraint value stops around the limit threshold, even the optimal solution is under it. 
On the contrary, IPO continues to search for a better policy even if the constraint is satisfied. 
Hence, it converges to a better reward and lower cost. 
%It can converge to policy whose constraint value is under the threshold with remaining high reward. 
% It verified our claim that CPO only searches the policy on the constrained boundary. But IPO will also handle the case that the unconstrained optimal point lies inside the constrained region. We will show the comparison more clear in experiment section \ref{exp:cpo}.

% To be fair, we implement PDO with the same objective reward function as IPO and set the same clip rate.
For PDO, we try to find the satisfactory initial Lagrange multiplier and learning rate with grid search, which is time-consuming. 
% For PDO, we initialize the Lagrange multiplier to be $0.01$ and learning rate to be $0.01$ for all three tasks. 
From Figure ~\ref{fig:gather_reward} and \ref{fig:gather_cost}, we can see that PDO can converge to a policy as good as IPO, however, the variance of the performance during training is high.  
%it can't remain consistent in the training process. 
%In some cases, the reward and constrained lines suddenly change in the training process. 
% Besides, we get lower constraint value without sacrificing too much reward by comparing PDO. 
The performance of PDO in Figure~\ref{fig:circle_reward} and \ref{fig:circle_cost} is worst of all. 
It indicates that the Lagrange multiplier $0.01$ and learning rate $0.01$ is a bad initialization. In the HalfCheetah-Safe task, Figure~\ref{fig:halfcheetah_reward} and \ref{fig:halfcheetah_cost}, PDO achieves a policy whose constraint value is lower than the limit, but the reward is the lowest as well. PDO is sensitive to the initialization of the Lagrange multiplier and learning rate. We will also demonstrate the impact of hyperparameters in the following experiment.

The PPO and TRPO  consider the optimization without constraints. They achieve higher rewards as well as violating the constraints more,   
compared to IPO, CPO and PDO. 
% Specifically, the PPO is implemented with the same objective reward function and clip rate as IPO and PDO. And the TRPO is implemented with the same objective reward function and KL divergence as CPO.

\begin{figure}[t]
    \centering
    % \begin{minipage}{\columnwidth}
    %     \centering
    %     Point Gather
    % \end{minipage}
    
     \begin{subfigure}[t]{0.47\columnwidth}
         \centering
         \includegraphics[width=\textwidth]{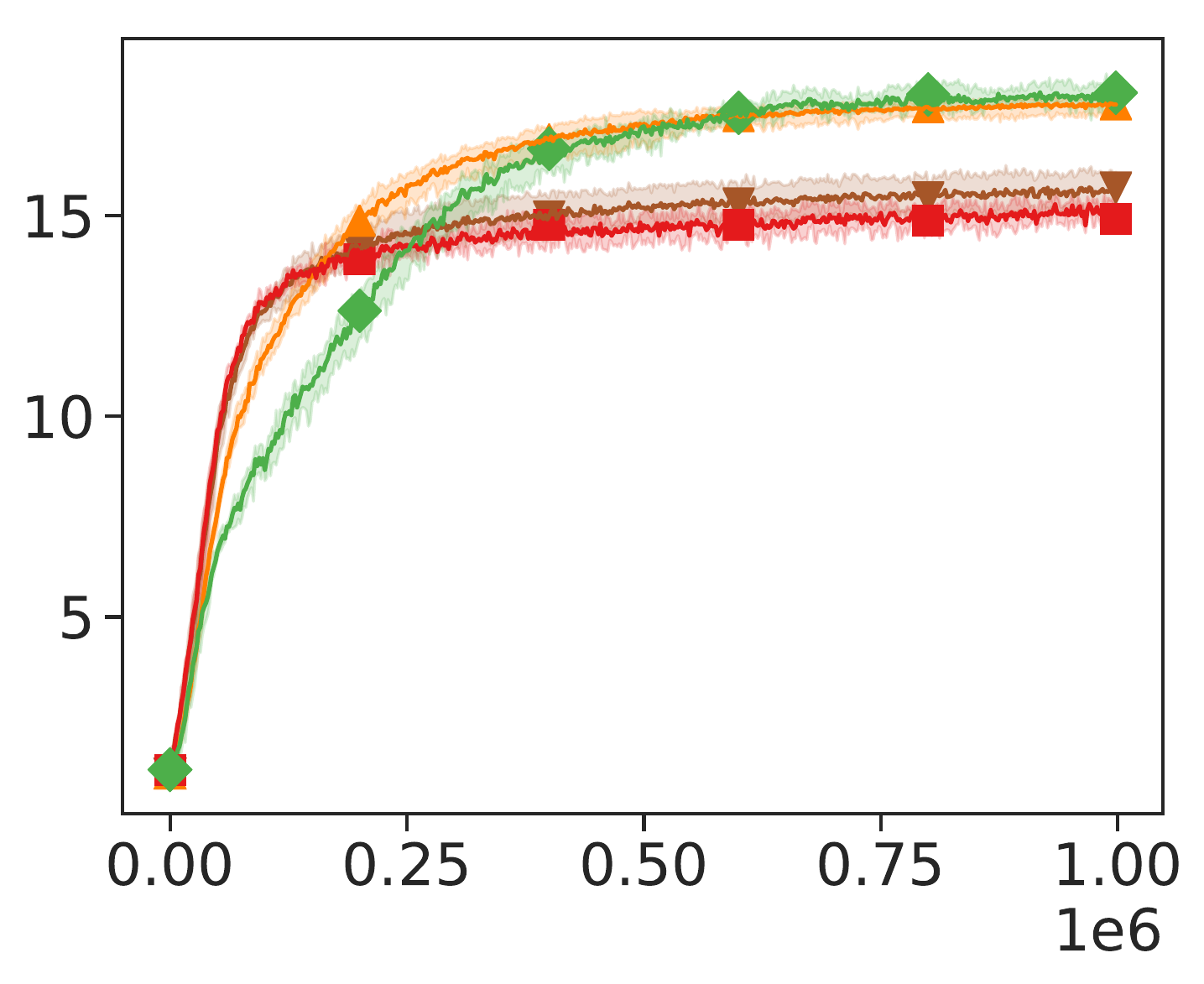}
         \caption{Reward}
         \label{fig:vscpo_reward}
     \end{subfigure}
     \hfill
     \vspace{3pt}
     \begin{subfigure}[t]{0.47\columnwidth}
         \centering
         \includegraphics[width=\textwidth]{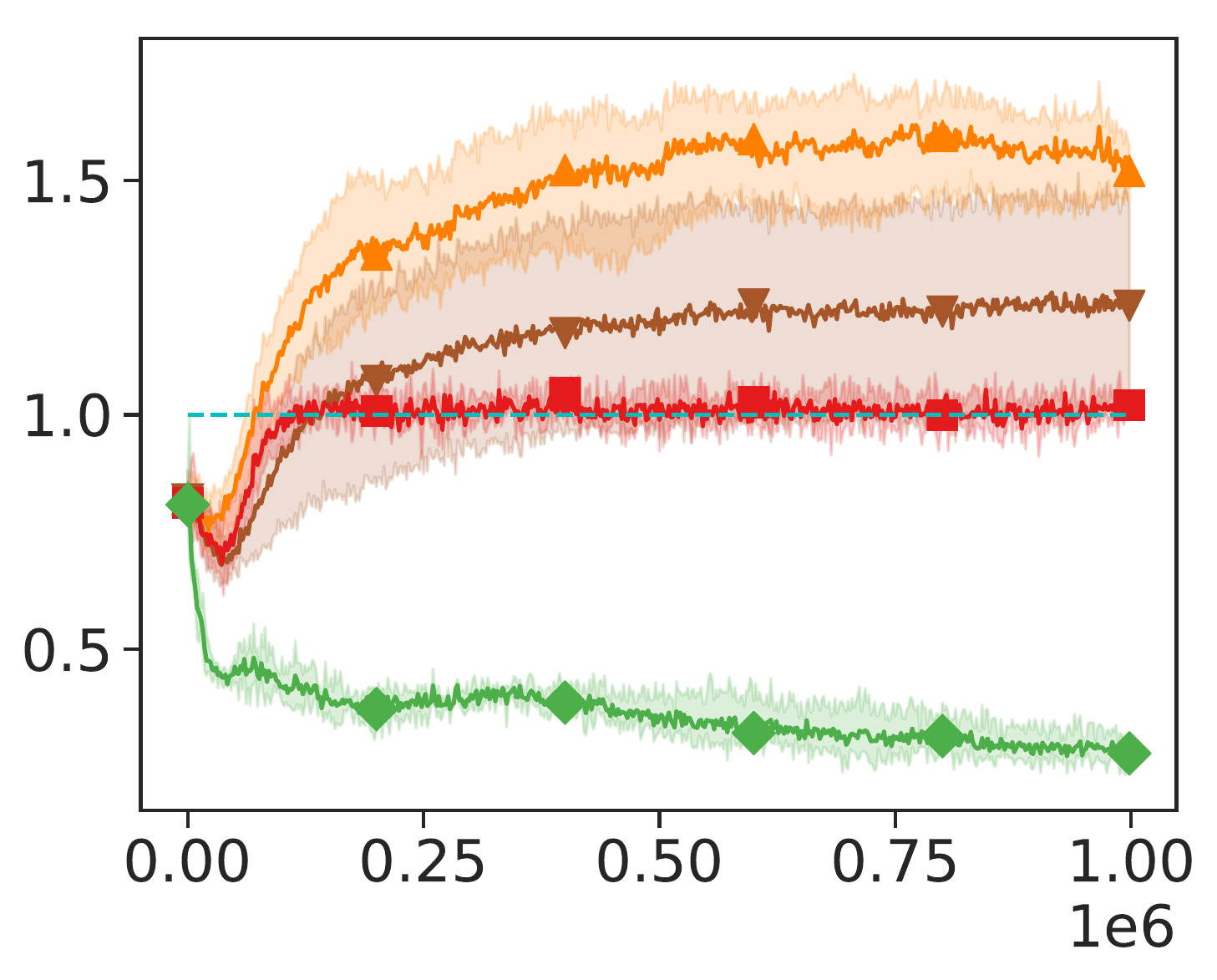}
         \caption{Constraint}
         \label{fig:vscpo_cost}
     \end{subfigure}
    
     \begin{minipage}[t]{0.9\columnwidth}
         \centering
         \includegraphics[width=\textwidth]{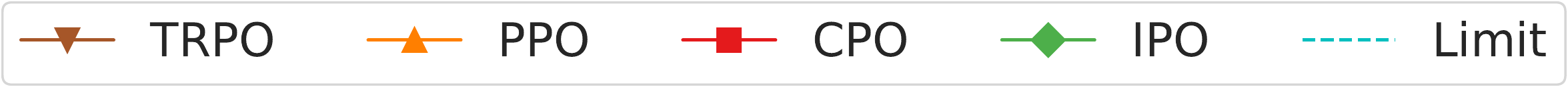}
    \end{minipage}
     
     \caption{Average performance of TRPO, PPO, CPO and IPO under constraint limit 1.}
     \label{fig:vscpo}
\end{figure}

% \begin{figure}[t]
%     \centering
%     % \begin{minipage}{\columnwidth}
%     %     \centering
%     %     Point Gather
%     % \end{minipage}
%      \begin{subfigure}[t]{0.47\columnwidth}
%          \centering
%          \includegraphics[width=\textwidth]{batchsize_reward.pdf}
%          \caption{Reward}
%          \label{fig:batch_reward}
%      \end{subfigure}
%      \hfill
%      \begin{subfigure}[t]{0.47\columnwidth}
%          \centering
%          \includegraphics[width=\textwidth]{batchsize_cost.pdf}
%          \caption{Constraint}
%          \label{fig:batch_cost}
%      \end{subfigure}
     
%     %  \begin{minipage}[t]{0.9\columnwidth}
%     %      \centering
%     %      \includegraphics[width=\textwidth]{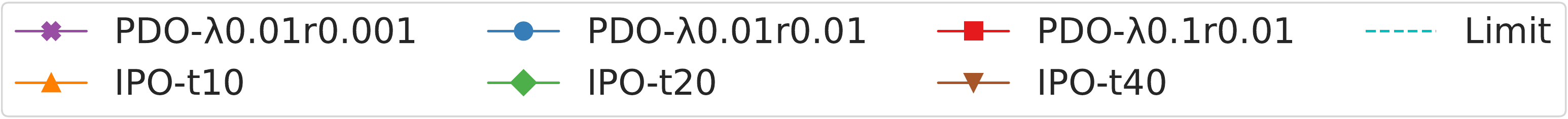}
%     % \end{minipage}
     
%      \caption{Average performance of CPO and IPO with different batch size.}
%      \label{fig:batch}
% \end{figure}

\subsection{Mean Valued Constraints}\label{exp:mean}

%As we stated in section \ref{introduction}, 
Our algorithm can not only support discounted cumulative constraints but also mean valued constraints (Eq. (\ref{mean})).
% \begin{equation*}
%     J_{C}^{\pi} = \mathbb{E}_{\tau \sim \pi}[\frac{1}{T}\sum_{t=0}^{T-1}\mathit{Cs}(s_{t},a_{t},s_{t+1})]
% \end{equation*}
% where $T$ is the step size in each trajectory.

We conduct experiments on optimization with the mean valued constrains on two Mujoco tasks (Point-Gather, Point-Circle) and the grid-world (Mars-Rover).  Because the CPO does not support mean valued constraints. We only compare IPO with PDO.

Figure~\ref{fig:mean} shows that IPO can consistently converge to a policy with high discounted cumulative reward and satisfy the mean valued constrains on all tasks. PDO, however, sometimes converges to a policy violating the constraints (Figure~\ref{fig:avegather_cost}) and has a higher variance during training  (Figure~\ref{fig:avecircle_cost} and Figure~\ref{fig:marsrover_cost}). 

% To find the proper hyperparameters of PDO, we do grid-search for the Lagrange multiplier and learning rate. 
% We can see that IPO can always satisfy the constraints on all tasks, 

%Besides, PDO can't remain consistent during training process as expected. 
% PPO and TRPO still get the the highest reward, but since they didn't consider the constraints, their constraint values are the highest as well. 

\begin{figure}[t]
    \centering
    % \begin{minipage}{\columnwidth}
    %     \centering
    %     Point Gather
    % \end{minipage}
     \begin{subfigure}[t]{0.47\columnwidth}
         \centering
         \includegraphics[width=\textwidth]{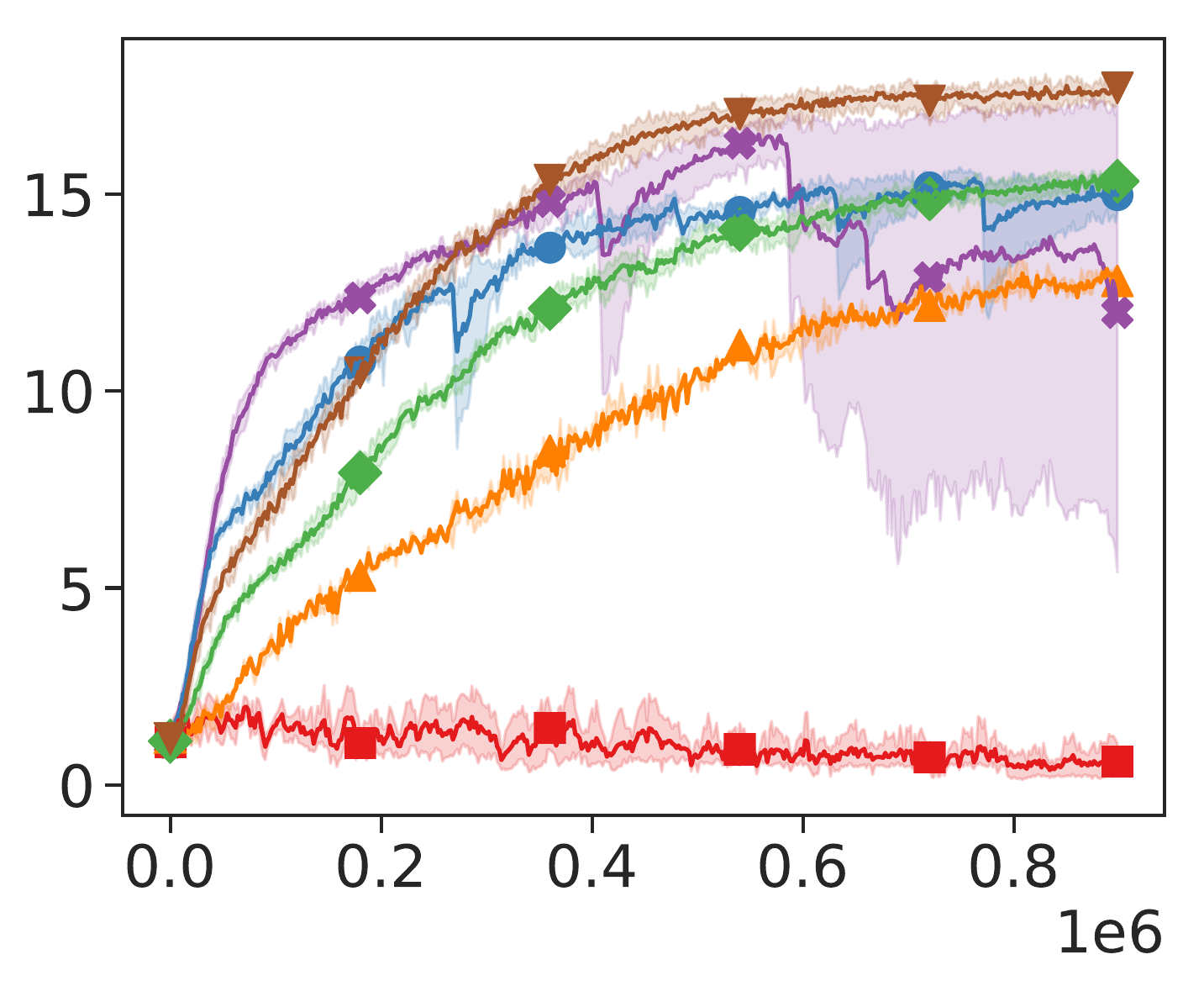}
         \caption{Reward}
         \label{fig:vspdo_reward}
     \end{subfigure}
     \hfill
     \vspace{3pt}
     \begin{subfigure}[t]{0.47\columnwidth}
         \centering
         \includegraphics[width=\textwidth]{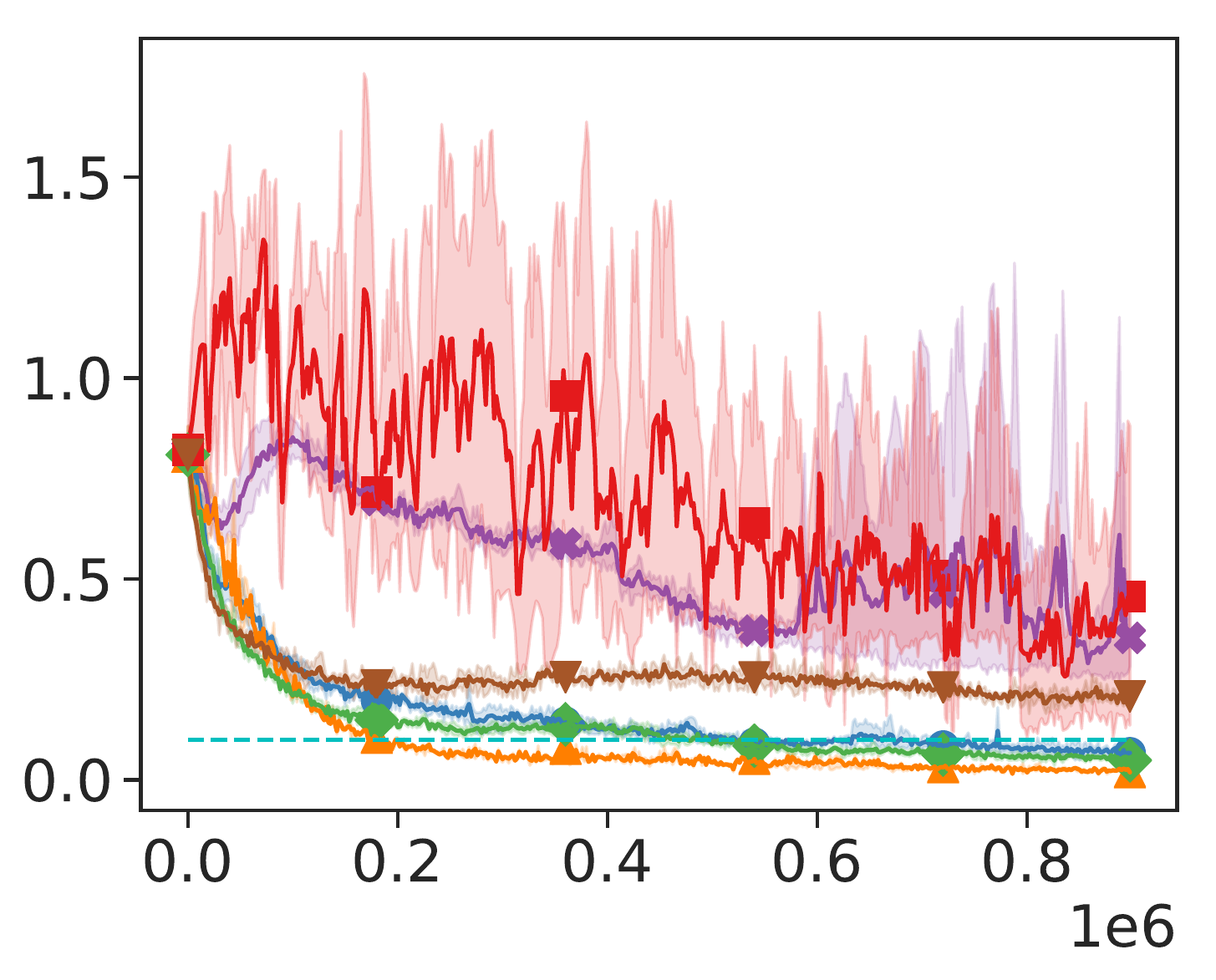}
         \caption{Constraint}
         \label{fig:vspdo_cost}
     \end{subfigure}
     
     \begin{minipage}[t]{0.9\columnwidth}
         \centering
         \includegraphics[width=\textwidth]{hyper_legend.pdf}
    \end{minipage}
     
     \caption{Average performance of PDO and IPO with different hyperparameters.}
     \label{fig:vspdo}
\end{figure}

\begin{figure*}[t]
     \centering
     \begin{subfigure}[t]{0.3\textwidth}
         \centering
         \includegraphics[width=\textwidth]{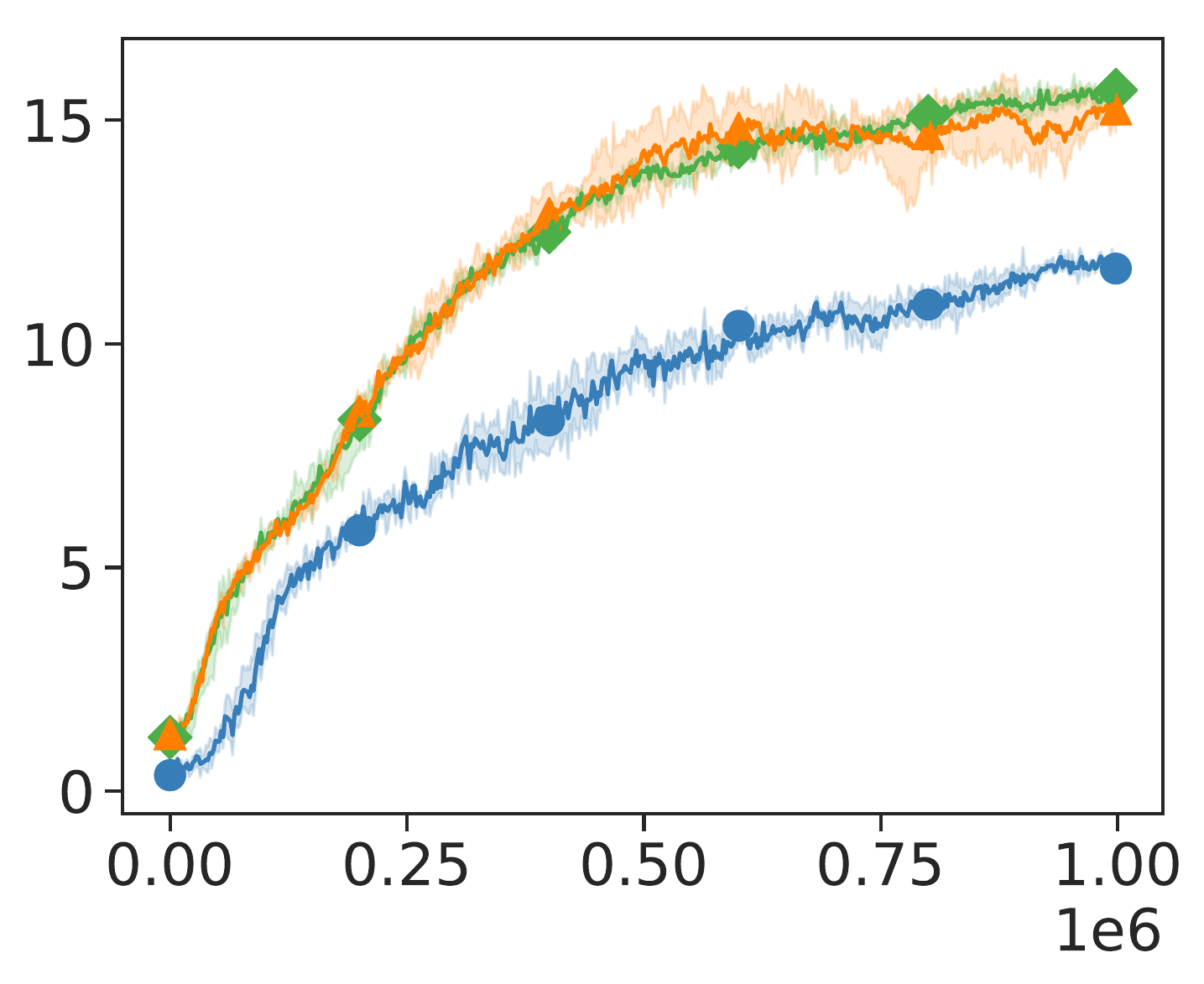}
         \caption{Reward}
         \label{fig:multi_reward}
     \end{subfigure}
     \hfill
     \begin{subfigure}[t]{0.3\textwidth}
         \centering
         \includegraphics[width=\textwidth]{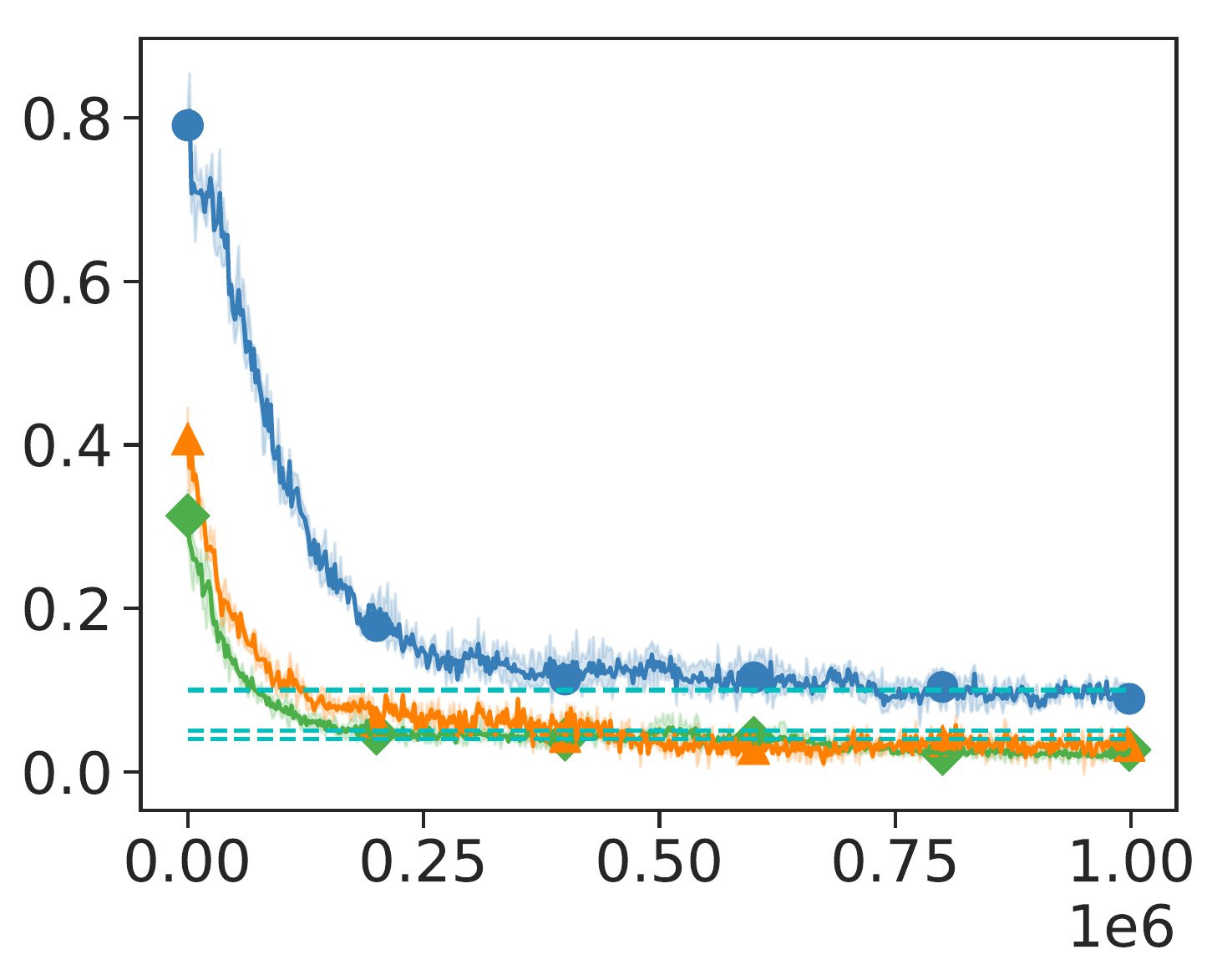}
         \caption{First constraint}
         \label{fig:multi_cost1}
     \end{subfigure}
     \hfill
     \vspace{3pt}
     \begin{subfigure}[t]{0.3\textwidth}
         \centering
         \includegraphics[width=\textwidth]{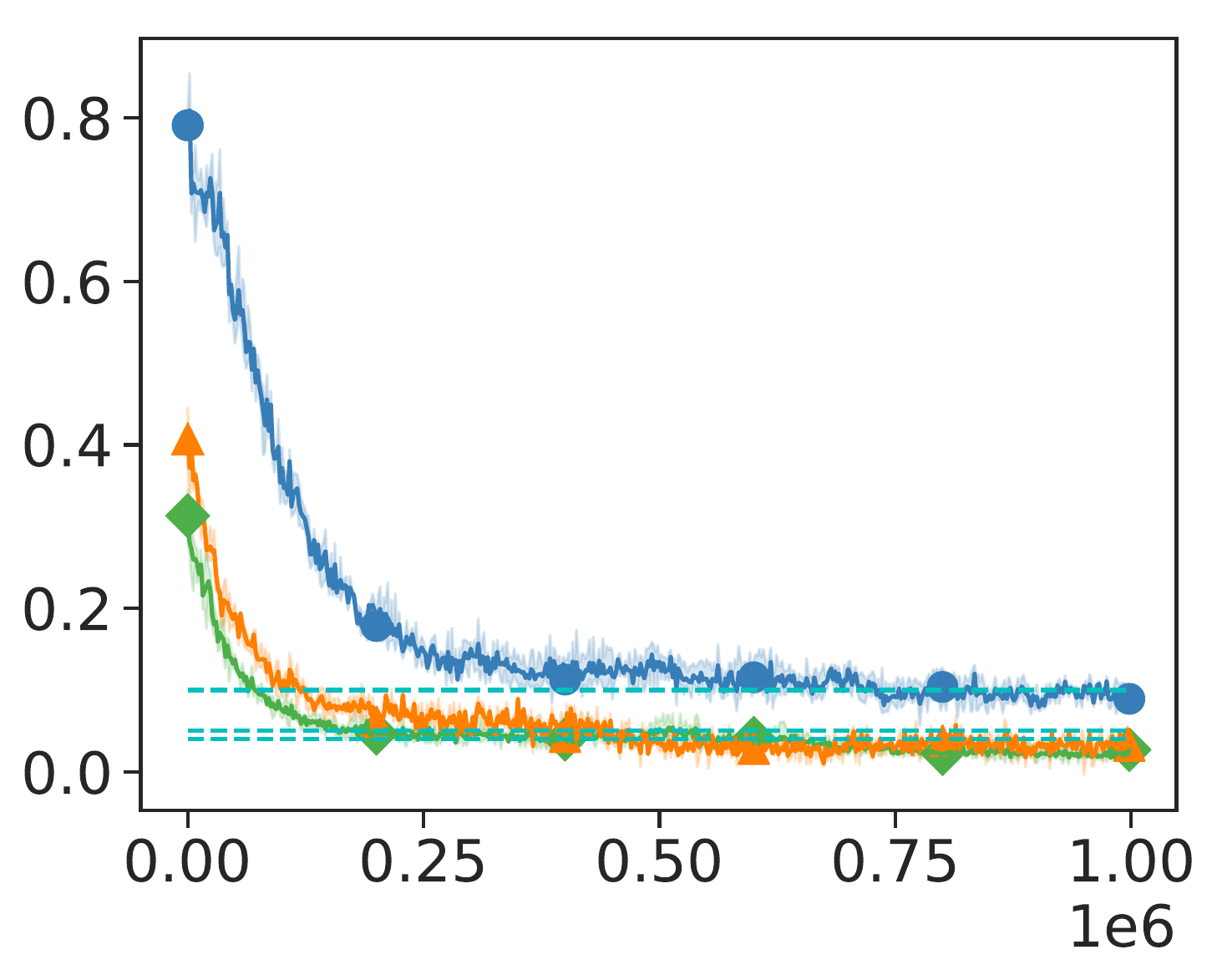}
         \caption{Second constraint}
         \label{fig:multi_cost2}
     \end{subfigure}
     \begin{minipage}[t]{0.9\textwidth}
         \centering
         \includegraphics[width=\textwidth]{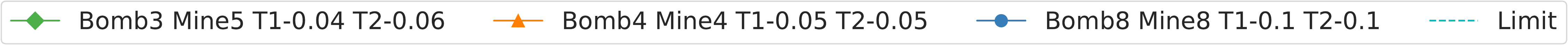}
    \end{minipage}
     \caption{Average performance of IPO under multi-constraints. T1 and T2 correspond to the the limits in (b) and (c) separately. The dash lines are limits for different task settings}
     \label{fig:multi}
\end{figure*}

\subsection{Constraint Effects}\label{exp:cpo} 

% As we stated in section \ref{method} and \ref{exp:discounted}, CPO didn't consider the case that the optimal point is under the threshold. It only converges the constraint exactly to the threshold. To make it more clear, 
In this experiment, we would like to analyze the effects when changing the constraints. 
We loosen the constraint in Point Gather with a larger threshold, to be $1$, which means that the Point agent can collect at most one bomb on average in each play.  
%The threshold 1 is a very loose constraint limit. 
Figure~\ref{fig:vscpo} demonstrates that both IPO and CPO can obtain almost the same discounted cumulative reward as their corresponding unconstrained method, PPO and TRPO. 
It indicates that such a constraint is so loose that the performance of the constrained optimization is equivalent to the unconstrained one. 
We observe that CPO still increases its cost to reach the constraint $1$, which is even worse than the randomly initialized policy (around $0.8$ at the first iteration). 
% More importantly, the average discounted cost will exactly converge to the value of the constraint threshold, which is worse than the unconstrained situation, TRPO. 
The experiment reflects that CPO always makes efforts to push its cost to the constraint threshold. On the contrary, IPO keeps decreasing its cost after the constraint is satisfied. 
%decreases the constraint value after it satisfies the constraint limit. 
% only search the policy on the boundary of the constraint region even the optimal point lies in the region. On the contrary, IPO still decreases the discounted cumulative constraint even it already satisfy the constraint threshold. 
%Furthermore, we replay the Point Gather task with the learned policies. 
%We play the Point Gather with the learned policies of IPO and CPO. 
Statistically, the average number of bombs collected for CPO is around $1$ and the number for IPO is around $0.25$. %, which verifies the result in Figure~\ref{fig:vscpo_cost}. 
We also attach the video visualizing the actions of policies learned with IPO and CPO playing a Point Gather task with a fixed configuration of $2$ apples and $8$ bombs, 
in the supplemental materials. 
%To understand what policy we have learned, we visualize it by designing a special case of the environment, which fixes the position of 2 apples and 8 bombs. We show the video in supplemental materials.

\subsection{Hyperparameter Tuning}\label{exp:pdo}
Compared to PDO, our hyperparameter $t$ is easier to tune. We conduct experiments in Point Gather. As shown in Figure~\ref{fig:vspdo}, the performance of PDO is sensitive to the initialization of the Lagrange multiplier $\lambda$ from $0.01$ to $0.1$. 
Figure \ref{fig:vspdo} also demonstrates that PDO is affected by the learning rate which changes from $0.01$ to $0.001$. The smaller learning rate slows down the policy convergence pace.
%When the initial Lagrange multiplier is $0.1$, we got zero reward and fluctuated constraint value. %Worse yet, the PDO is also influenced by the learning rate $r$. 
%The learning rate can't be too small or too large. If it's too small, the Lagrange multiplier updates slowly to find the best trading-off weight; if it's too large, it may skip the best Lagrange multiplier. Comparing the learning rate of 0.01 and 0.001 in Figure~\ref{fig:vspdo}, PDO can't satisfy the constraint with finite samples when the learning rate is 0.001. 

Tuning the initial Lagrange multiplier and learning rate takes a lot of efforts in PDO. 
On the contrary, the reward and cost of IPO are positively correlated with the hyperparameter $t$, which enables us to employ a binary search for a feasible hyperparameter conveniently. As shown in Figure~\ref{fig:vspdo}, we achive higher reward and cost with larger $t$. Theoretically, we start from a value $N$ big enough, and it takes us at most $O(\log(N))$ to find a feasible $t$ maximize the reward and reduce the cost to satisfy the constraints.  In practice, we usually initialize $t$ to be the maximal value of discounted cumulative reward and it can be found in a few iterations. Besides, it's fixed for each task even in different settings (e.g. different constraint limits).

% \subsection{Sample efficiency}
% Furthermore, IPO takes less sample data than CPO reaching to their corresponding performance. Figure~\ref{fig:batch_cost} show that when we decrease the batch size from 20\,000 to 10\,000, IPO can get similar performance with less Sample. But CPO loses the stability when the batch size is 10\,000, which means CPO needs more data in each iteration and it needs more time to collect samples.

\subsection{Multiple Constraints} 
IPO can be conveniently extended to the optimization with multiple constraints by adding a logarithm barrier function for each constraint, which is much easier to implement than CPO.   
In this section, we conduct three experiments in Point Gather, each with two constraints. 
%In original Point Gather, the point agent wants to collect as many apple balls as it can, but the number of collected bomb balls needs to be less than a threshold. 
To extend the task with multiple constraints, we add another type of balls,  mine balls, in the task. The cost of mine balls is the same as bomb balls, which is $1$. Now our goal is to maximize the number of apple balls collected, with constraints on the number of bomb balls and mine balls collected. Below we describe our settings, where the values in the parentheses are the constraints of the maximum expected numbers of corresponding balls collected in one play. Our settings are  
\begin{enumerate}
    \item two apples, three bomb balls ($0.04$), five mine balls ($0.06$);
    \item two apples, four bomb balls ($0.05$), four mine balls ($0.05$);
    \item two apples, eight bomb balls ($0.1$), eight mine balls ($0.1$); 
\end{enumerate}

From Figure~\ref{fig:multi}, we can see IPO can converge to a policy satisfying both constraints in all three settings while achieving a high reward. 
% For the setting 1 and 3, we used the same hyper-parameter $t_{i}$ for both constraints, because their both constrains are in the same scale. We can still solve the multi-constraints problem when the scale of constraints is different (e.g. setting 2).

\subsection{Stochastic Environment Effects}

\begin{figure}[t]
     \centering
     \begin{subfigure}[t]{0.45\columnwidth}
         \centering
         \includegraphics[width=\textwidth]{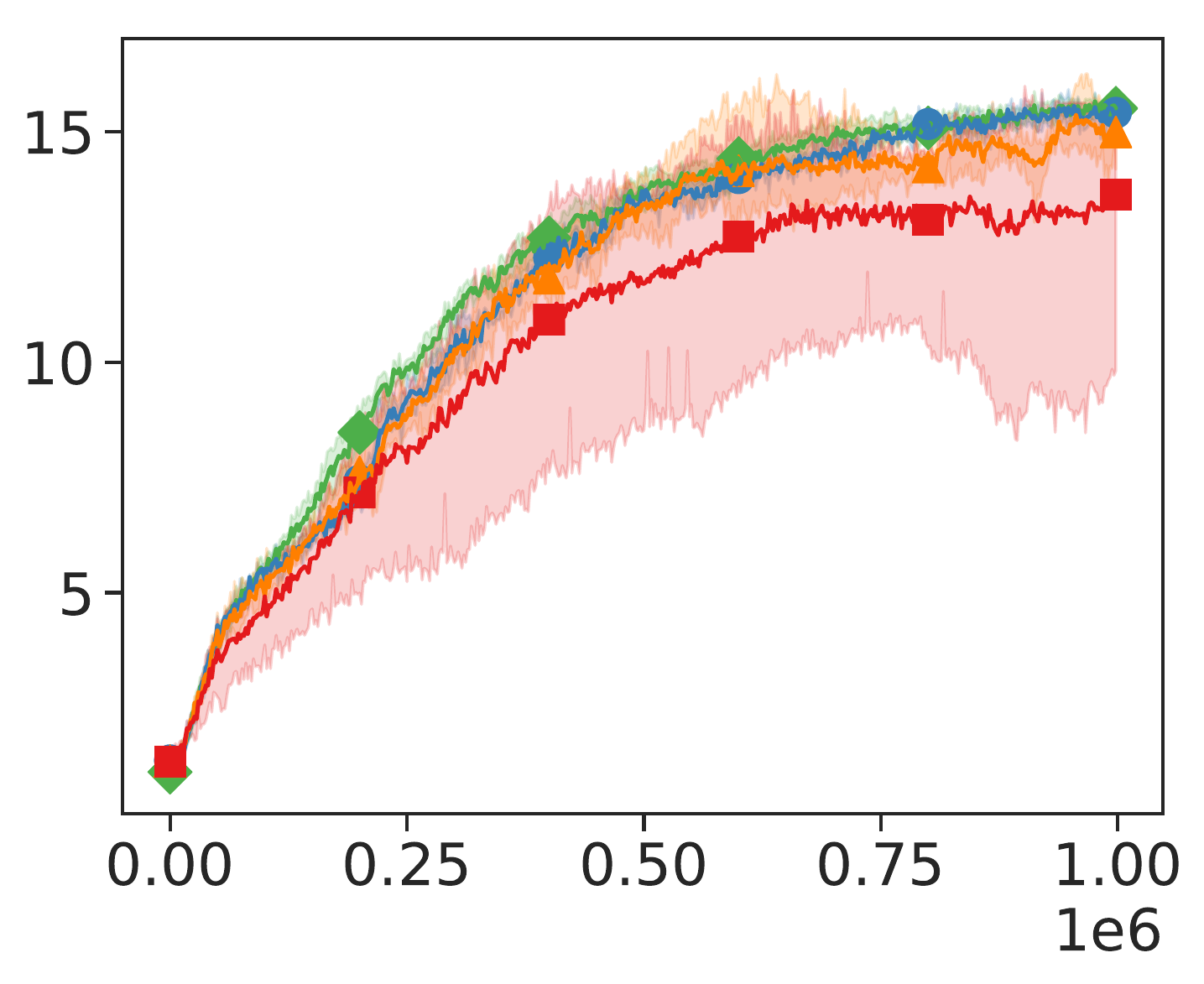}
         \caption{Reward}
         \label{fig:noise_return}
     \end{subfigure}
     \hfill
     \vspace{3pt}
     \begin{subfigure}[t]{0.45\columnwidth}
         \centering
         \includegraphics[width=\textwidth]{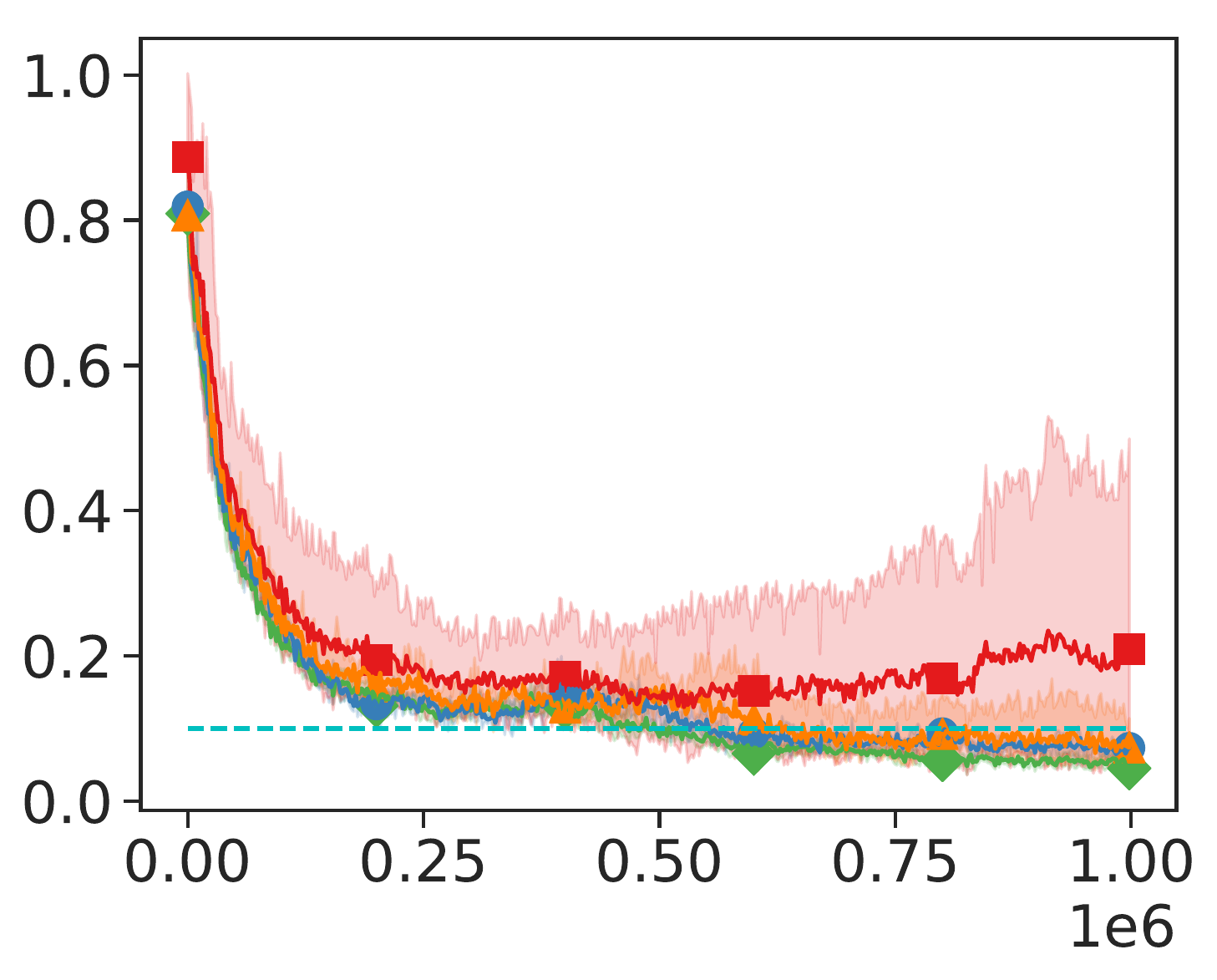}
         \caption{Constraint}
         \label{fig:noise_cost}
     \end{subfigure}
     \begin{minipage}[t]{0.9\columnwidth}
         \centering
         \includegraphics[width=\textwidth]{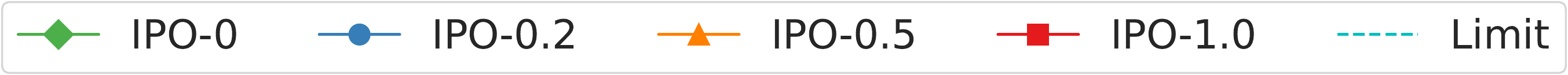}
    \end{minipage}
    \caption{Average performance of IPO under different noise scale. IPO-0 means no noise is added.}
    \label{fig:noise}
\end{figure}

All the tasks mentioned above have deterministic feedback and control. However, in real-world scenarios, there is always uncertainty from the environment. 
In this section, we do an extra experiment showing that our method is robust to a stochastic environment, where the outcome of an action is affected by random noise. 
This setting is inspired by robotics manipulation with uncertainty.

The agent's actions is represented as a vector of velocities and heading directions ranging from $-1$ to $1$. 
In the experiment, we add random noises following normal distributions with mean value $0$ and variances $0.2, 0.5, 1.0$ to the outcome of agent actions. 
%By multiplying the noise by different scale factors (floating point numbers), we can simulate different noise levels. 
%In our experiment, we set the noise scale factors to be $0.2, 0.5$ and $1$. 
By comparing the performance in Figure ~\ref{fig:noise}, we can see that 
IPO can still converge to a satisfied policy even when the scale factor is $0.5$. 

% is robust to stochastic environments to some degree. 
% % Most actions range around from -1.5 to 1.5\ys{?}. 
% We can see from Figure ~\ref{fig:noise} that when the scale is 0.5, we .

\section{Conclusion}\label{discussion}
In this paper, we propose a first-order policy optimization method, Interior-point Point Optimization (IPO),  to solve constrained reinforcement learning problems. %Our method arguments the objective function (PPO clipped surrogate function) with logarithmic barrier functions. 
Compared to the state-of-the-art methods, IPO achieves better performance and handles more general types of multiple cumulative constraints. 
In practice, IPO is easy to implement and the hyperparameters are easy to tune. In the future,
we will further provide thorough theoretical guarantees for our algorithm, and apply more optimization techniques to handle RL with constraints. 
\newpage
\bibliographystyle{aaai}
\bibliography{bibfile}
\end{document}